\theoremstyle{plain}
\newtheorem{theorem}{Theorem}
\newtheorem{proposition}{Proposition}
\theoremstyle{definition}
\newtheorem{definition}{Definition}
\newtheorem{observation}{Observation}
\newtheorem{remark}{Remark}
\DeclareMathOperator*{\argmax}{arg\,max}
\newcommand{\ind}{\mathbf{1}}
\newcommand{\E}{\mathbb{E}}
\newcommand{\brac}[1]{\left[ #1 \right]}
\newcommand{\set}[1]{\left\{ #1 \right\}}
\newcommand{\inner}[1]{\left\langle #1 \right\rangle}
\newcommand{\R}{\mathcal{R}}
\renewcommand{\S}{\mathcal{S}}
\newcommand{\A}{\mathcal{A}}
\renewcommand{\O}{\mathcal{O}}
\newcommand{\mS}{\overline{\mathcal{S}}}
\newcommand{\mA}{\overline{\mathcal{A}}}
\newcommand{\mP}{\overline{P}}
\newcommand{\mr}{\overline{r}}
\newcommand{\mM}{\overline{M}}
\newcommand{\mG}{\overline{G}}
\newcommand{\mmu}{\overline{\mu}}
\newcommand{\ms}{\overline{s}}
\newcommand{\ma}{\overline{a}}
\newcommand{\hist}{\mathcal{H}}
\newcommand{\sdag}{s^{\dagger}}
\newcommand{\adag}{a^{\dagger}}
\newcommand{\rdag}{r^{\dagger}}
\newcommand{\odag}{o^{\dagger}}
\title{Optimal Attack and Defense for Reinforcement Learning}
\author{
    Jeremy McMahan,
    Young Wu,
    Xiaojin Zhu,
    Qiaomin Xie
}
\begin{document}

\maketitle

\begin{abstract}
To ensure the usefulness of Reinforcement Learning (RL) in real systems, it is crucial to ensure they are robust to noise and adversarial attacks. In adversarial RL, an external attacker has the power to manipulate the victim agent's interaction with the environment. We study the full class of online manipulation attacks, which include (i) state attacks, (ii) observation attacks (which are a generalization of perceived-state attacks), (iii) action attacks, and (iv) reward attacks. We show the attacker's problem of designing a stealthy attack that maximizes its own expected reward, which often corresponds to minimizing the victim's value, is captured by a Markov Decision Process (MDP) that we call a meta-MDP since it is not the true environment but a higher level environment induced by the attacked interaction. We show that the attacker can derive optimal attacks by planning in polynomial time or learning with polynomial sample complexity using standard RL techniques. We argue that the optimal defense policy for the victim can be computed as the solution to a stochastic Stackelberg game, which can be further simplified into a partially-observable turn-based stochastic game (POTBSG). Neither the attacker nor the victim would benefit from deviating from their respective optimal policies, thus such solutions are truly robust. Although the defense problem is NP-hard, we show that optimal Markovian defenses can be computed (learned) in polynomial time (sample complexity) in many scenarios.

\end{abstract}

\section{Introduction}\label{sec: intro}

Reinforcement Learning (RL) has become a staple with a plethora of applications including the breakthrough ChatGPT~\cite{InstructGPT}. With the growth of RL applications, it is critical to understand the security threats posed to RL and how to defend against them. In many applications, noisy measurements can cause the agent-environment interaction to evolve entirely differently than what one would expect in theory. Even worse, malicious attackers can strategically modify the agent-environment interaction to induce catastrophic outcomes for the agent. If RL methods are to be used in diverse and critical settings, it is essential to ensure these RL algorithms are robust to potential attacks.  

In adversarial RL, a victim agent interacts with an environment while being disrupted by an attacker. The attacker has the power to manipulate each aspect of the victim-environment interaction. In particular, the attacker can change: (i) the environment's state (\emph{state attacks}), (ii) the victim's observation (\emph{observation attacks}), (iii) the action taken by the victim (\emph{action attacks}), and (iv) the reward received by the victim (\emph{reward attacks}). When the environment is fully-observable, observation attacks translate to well-studied \emph{perceived-state attacks}. We refer to all of these attack surfaces by \emph{online manipulation attacks}. The attacker may use a subset or all of these attack surfaces to optimize its own expected reward from the attack, which often corresponds to minimizing the victim's value. However, the attacker cannot perform arbitrary manipulations without raising suspicion. Hence, the attacker must restrict its manipulations to a predefined set of stealthy attacks. On the other hand, the attacker-aware victim seeks to choose a \emph{defense} policy whose value is provably robust even under the worst possible stealthy attack.

From the attacker's perspective, it faces an optimal control problem: it needs to strategically choose stealthy attacks to optimize its value. Unlike typical control problems, the attacker must deal with the uncertainty of the victim's actions in addition to that of the stochastic environment. Thus, the attacker's problem involves a multi-agent feature. For any fixed victim policy $\pi$, we can view the attacker's problem as computing its best response attack to the victim's chosen $\pi$. 
From the victim's perspective, we argue it faces a Stochastic Stackelberg game: it needs to choose a policy that achieves maximum value in the environment under the attacker's best-response attack. A defense policy designed following this principle ensures neither the victim nor the attacker would benefit from deviating from their chosen policies, and so an equilibrium would be achieved. This implies that regardless of the attack, the defense policy always achieves at least the game's optimal value. However, computing optimal Stackelberg strategies for stochastic games is NP-hard. Thus, both the attacker and the victim are faced with challenging optimization problems.

Although the attack and defense problems are of great importance, complete solutions have yet to be discovered. For the attack problem, most works focus on the empirical aspects, lacking theoretical guarantees. Provably optimal attacks have only been devised for the special case of test-time, perceived-state attacks~\cite{Proutiere2021,Zhang2020,Sun2022}. The situation is even worse for the defense problem, which is arguably more important. Nearly all proposed defenses are designed to be effective against a specific, known attack. This results in a cat-and-mouse game: the attacker can just design a new attack for the given defense policy and so the victim would always be at risk. In addition, the two approaches with provable guarantees are restricted to the planning, reward-poisoning setting~\cite{banihashem2021defense}, and the test-time, perceived-state attack setting~\cite{Zhang2021}. Furthermore, neither defense can be computed efficiently and it is unrealistic to assume the victim knows the attacker's exact algorithm.

\paragraph{Our Contributions.} Despite the challenges of the attack and defense problems, we develop frameworks for computing optimal attacks and defenses for any combination of attack surfaces, which are provably efficient in many cases. From the attacker's side, we show that for any fixed victim policy, the optimal attack can be computed as the solution to another Markov Decision Process (MDP). We call this environment a \emph{meta-MDP} since it is not the true environment, but is a higher-level environment induced by the victim-attacker-environment interaction. Importantly, the attacker can simulate an interaction with the meta-MDP by interacting with the victim and the true environment. Hence, the attacker can attack optimally by solving the meta-MDP using any standard MDP planning or RL algorithms. In addition, we show that the size of the meta-MDP is polynomial in the size of the original environment and the size of the victim's policy. Thus, optimal attacks can always be computed or learned efficiently. Our framework also extends to linear MDPs. Hence, we provide the first provably optimal attacks for beyond perceived-state attacks and  the first provably optimal attacks for the linear setting, all of which can be computed in polynomial time. We note our framework also solves the certifying robustness problem posed in ~\cite{crop}.

On the victim's side, we argue that the defense problem is most naturally modeled by a stochastic Stackelberg game~\cite{SSG}, which can be captured by a much simpler partially-observable turn-based stochastic game (POTBSG)~\cite{TBSGtheory}. Thus, the victim can compute its optimal robust defense by finding a weak Stackelberg equilibrium (WSE) for the meta-POTBSG. Again, the victim can simulate the meta-POTBSG by interacting with the attacker and the true environment. When the attacker is adversarial, the victim can defend optimally by solving the meta-POTBSG using any standard zero-sum POTBSG planning or distributed learning algorithms. Unlike the attack problem, we show that the victim's defense problem is NP-hard in general even to find approximate solutions when observation attacks are permitted. However, we show that optimal Markovian defenses can be computed efficiently when excluding observation attacks by exploiting the sequential nature of the attacks. This gives a broad class of games for which WSE is computable. Overall, we present the first-ever provable defense algorithms for both the planning and learning settings and show our defenses can be computed efficiently for a broad class of instances.

\subsection{Related Work}
Many prior works have studied adversarial RL under various models and objectives. Amongst the first works,~\citet{Behzadan2017, Huang2017,Kos2017} study perceived-state attacks through the lens of adversarial examples for deep neural nets ~\cite{advexamples}.~\citet{Kos2017} also considers adversarial examples, but with the goal of minimizing the number of attacks needed  to achieve large damage. These works focused on achieving large damage at the current time. Later ~\citet{Lin2017, Sun2020} developed more advanced heuristics that incorporate future value into their attacks to achieve long-term damage. Meanwhile,~\citet{Tretschk2018} trained an adversarial deep net to compute perturbations that allows other objectives for the attacker.

Afterward, many works began considering the objective of maximizing the damage to the victim rather than minimizing the number of attacks.~\citet{Proutiere2021,Zhang2020} developed optimal algorithms for computing perceived-state attacks. Both works formulated the attack problem as a different MDP as we do here.~\citet{Sun2022} formulated an actor-director model for the attack problem that is easier to solve for some MDPs and retains guarantees of optimality. The idea of adversarial training was then used in conjunction with the attack formulation from~\cite{Zhang2020} to obtain experimentally robust victim policies~\cite{Zhang2021}.

Action and reward attacks have been considered heavily in the training-time setting. For example, ~\citet{actionatt2train,actionattspecial} considered action attacks. Reward poisoning attacks are the focus of the work by ~\citet{rewardpoison,trainactionreward}. In fact, a combination action and reward attack are devised by ~\citet{trainactionreward}. Most of these works consider the policy teaching setting, where the attacker's goal is for the victim to follow a fixed policy $\pi^{\dagger}$. Some algorithms achieve sublinear regret for the attacker when the victim policy is no regret ~\cite{minregret-action}; though, none compute truly optimal attacks.

\section{Attack Surfaces}\label{sec: surfaces}

\paragraph{POMDPs.} We denote a infinite-horizon discounted environment POMDP by $M = (\S, \O, \A, P, R, \allowbreak \gamma, \mu)$ where (i) $\S$ is the state set, (ii) $\O$ is the observation set, (iii) $\A$ is the action set, (iv) $P : \S \times \A \to \Delta(\S)$ is the transition kernel, (v) $R : \S \times \A \to \Delta(\mathbb{R})$ is the reward distribution, (vi) $\gamma$ is the discount factor, and (vii) $\mu \in \Delta(S)$ is the initial state distribution. We let $\O(s)$ denote the distribution of observations at state $s$. We also let $\R$ denote the set of all supported rewards. The total expected reward the victim receives from following policy $\pi$ in environment $M$ is its \emph{value}, i.e., the expected cumulative discounted rewards $V^{\pi}_M : = \E_{M}^{\pi}\brac{\sum_{t = 0}^{\infty} \gamma^t r(s_t,a_t)}$. 

Suppose the victim interacts with a Markovian environment, $M$, using a fixed stationary, Markovian policy $\pi : \O \to \Delta(A)$. At any time $t$, let $s_t$ denote $M$'s current state and $o_t$ denote the generated observation. In the standard setting, the victim chooses an action $a_t \sim \pi(o_t)$ and then receives a reward $r_t \sim R(s_t,a_t)$. Afterwards, $M$ transitions to its next state $s_{t+1} \sim P(s_t,a_t)$. We see there are several points during time $t$ at which information is exchanged between the victim and $M$. We further break down the interaction at time $t$ based on these points of information exchange, which we call \emph{subtimes}:
\begin{enumerate}
    \item At the first subtime, $t_1$, $M$ receives its state $s_{t} \sim P(s_{t-1},a_{t-1})$.
    \item At the second subtime, $t_2$, the victim receives its observation $o_t \sim \O(s_t)$.
    \item At the third subtime, $t_3$, $M$ receives the victim's action $a_t \sim \pi(o_t)$.
    \item At the fourth subtime, $t_4$, the victim receives its reward $r_t \sim R(s_t,a_t)$.
\end{enumerate}

\paragraph{Online Attacks.} In the adversarial setting, a third-party called the \emph{attacker} interferes with the victim-$M$ interaction. Here, the attacker may intercept and then corrupt the information being exchanged between the victim and environment $M$. The attacker has access to four attack surfaces:
\begin{enumerate}
    \item (\emph{State Attack}) A state attack changes the state of $M$ from $s_t$ to $\sdag_t$. The attack influences the observation $o_t \sim \O(\sdag_t)$. If $M$ receives action $a_t$, the attack also influences the reward $r_t \sim R(\sdag_t, a_t)$ and the next state $s_{t+1} \sim P(\sdag_t, a_t)$.
    \item (\emph{Observation Attack}) An observation attack causes the victim to receive observation $\odag_t$ instead of $o_t \sim \O(s_t)$. The attack influences the victim's action $a_t \sim \pi(\odag_t)$.
    \item (\emph{Action Attack}) An action attack causes $M$ to receive action $\adag_t$ instead of $a_t$. The attack influences the reward $r_t \sim R(s_t, \adag_t)$ and next state $s_{t+1} \sim P(s_t, \adag_t)$.
    \item (\emph{Reward Attack}) A reward attack causes the victim to receive reward $\rdag_t$ instead of reward $r_t \sim R(s_t, a_t)$. 
\end{enumerate}
We call each of these attack surfaces as \emph{online manipulation attacks}. These attack surfaces in conjunction give the attacker the power to corrupt every element of the triple $(s,a,r)$ that define the interaction between the victim and $M$. 

If $M$ is fully observable, observation attacks correspond to \emph{perceived-state} attacks, which change what the victim thinks is $M$'s state. Notice unlike the other surfaces, state attacks could be performed at two different subtimes. Namely, the attacker can change the state before $M$ transitions at $t_1$ or before $M$ receives the victim's action at $t_3$. For simplicity, we assume state attacks only happen at $t_1$, but our results apply equally well to both versions.

\paragraph{Adversarial Interaction.} Overall, the victim-attacker-$M$ interaction at time $t$ now evolves as follows:
\begin{enumerate}
    \item At subtime $t_1$, $M$ is in state $s_t$.
    \begin{enumerate}
        \item Attacker: changes $s_t$ to $\sdag_t$.
        \item $M$: enters state $\sdag_{t}$ and generates observation $o_{t} \sim \O(\sdag_{t})$.
    \end{enumerate}
    \item At subtime $t_2$, $M$ is in state $\sdag_t$ and has generated observation $o_t$.
    \begin{enumerate}
        \item Attacker: changes $o_t$ to $\odag_t$.
        \item Victim: chooses action $a_t \sim \pi(\odag_t)$.
    \end{enumerate}
    \item At subtime $t_3$, $M$ is in state $\sdag_t$ and the victim chose action $a_t$.
    \begin{enumerate}
        \item Attacker: changes $a_t$ to $\adag_t$.
        \item $M$: generates reward $r_t \sim R(\sdag_t,\adag_t)$ and generates state $s_{t+1} \sim P(\sdag_t, \adag_t)$.
    \end{enumerate}
    \item At subtime $t_4$, $M$ has generated reward $r_t$.
    \begin{enumerate}
        \item Attacker: changes $r_t$ to $\rdag_t$.
        \item Victim: receives reward $\rdag_t$.
    \end{enumerate}
\end{enumerate}
This process then repeats starting from $s_{t+1}$.

\paragraph{Attacker Constraints.} In general, the attacker may not arbitrarily manipulate the interaction. For example, some attacks may be physically impossible or risk detection. As such, we assume the attacker has a set $\mathcal{B}$ that defines the feasible manipulations it can perform. For example, the attacker might require a manipulated observation to be visually similar to the true observation. Thus, the set of feasible observation attacks should depend on the true observation. Applying the same logic to each attack surface, we see the feasible attack sets should take the form: $\mathcal{B}(s) \subseteq \S$, $\mathcal{B}(o) \subseteq \O$, $\mathcal{B}(a) \subseteq \A$, and $\mathcal{B}(r) \subseteq \R$. However, in some cases, the feasibility of an attack would depend on the interaction before the attack, not just the current element being manipulated. To be fully general, we allow the feasibility sets to take the form: at subtime $t_1$, $\mathcal{B}(s) \subseteq \S$; at subtime $t_2$, $\mathcal{B}(s,o) \subseteq \O$; at subtime $t_3$, $\mathcal{B}(s,o,a) \subseteq \A$; and, at subtime $t_4$, $\mathcal{B}(s,o,a,r) \subseteq \R$.

\section{Optimal Attacks}\label{sec: attack}

\paragraph{Attacker's Goal.} We saw how an attacker can disrupt an interaction but have yet discussed why it would do this. Suppose the attacker has a reward function $g(s,a,r)$ that depends on the victim's received reward, possibly in addition to $M$'s state and the victim's action. The attacker's goal is then to construct an attack that maximizes its own expected reward. Commonly, an attacker just wants to minimize the victim's expected reward under attack, or equivalently maximize the damage to the victim's expected reward. In this case, the attacker's reward function is $g(s,a,r) = -r$. Alternatively, the attacker may want the victim to behave in a specified way. This goal is equivalent to the attacker wanting the victim to choose actions that match a fixed target policy $\pi^{\dagger}$ as often as possible. In this case, the attacker's reward function is $g(s,a,r) = \ind\set{a = \pi^{\dagger}(s)}$. 

\begin{definition}[Attack Problem]\label{def: attack}
    For any $\pi$, the attacker's seeks a policy $\nu^* \in N$ that maximizes its expected reward from the victim-attacker-$M$ interaction:
    \begin{align}
        &\nu^* \in \argmax_{\nu \in N} \E^{\pi,\nu}_{M} \brac{\sum_{t = 0}^{\infty} \gamma^t g(s_t,a_t,r_t)}.
    \end{align}
\end{definition}

We show that the attacker's problem is captured by a MDP. The key insight is that by defining the attacker's state set to capture the results of previous attacks from $t_1$ up to the current subtime, then each attack becomes Markovian with respect to the expanded state set. This is not a significant burden on the attacker since it would need to keep track of this information anyway to compute the feasible attack sets. Thus, the attacker just needs to keep track of the information within a time step to compute optimal attacks.

\begin{definition}[Meta-MDP]\label{def: meta-MDP}
For any victim policy $\pi$, the attacker's meta-MDP is $\mM = (\mS, \mA, \mP, \mr, \overline{\gamma}, \mmu)$ where, 

\begin{itemize}
    \item $\mS = \S \cup (\S \times \O) \cup (\S \times \O \times \A) \cup (\S \times \O \times \A \times \R)$.
    
    \item $\mA(s) = \mathcal{B}(s)$, $\mA(s,o) = \mathcal{B}(s,o)$, $\mA(s,o,a) = \mathcal{B}(s,o,a)$, and $\mA(s,o,a,r) = \mathcal{B}(s,o,a,r)$.

    \item The transitions vary per subtime. Let $\ms \in \mS$, $\ma \in \mA(\ms)$, and $\ms' \in \mS$.

    \begin{enumerate}
        \item If $\ms = s$, then $\ma = \sdag$ and $\ms' = (\sdag, o)$: \(
            \mP(\ms ' \mid \ms, \ma) = \O(o \mid \sdag).
        \)
        \item If $\ms = (s,o)$, then $\ma = \odag$ and $\ms' = (s, \odag, a)$: 
        \(
            \mP(\ms' \mid \ms, \ma) = \pi(a \mid \odag).
        \)
        \item If $\ms = (s,o,a)$, then $\ma = \adag$ and $\ms' = (s, o , \adag, r)$: 
        \(
            \mP(\ms' \mid \ms, \ma) = R(r \mid s, \adag).
        \)
        \item If $\ms = (s,o,a,r)$, then $\ma = \rdag$ and $\ms' = s'$:
        \(
            \mP(\ms' \mid \ms, \ma) = P(s' \mid s, a).
        \)
    \end{enumerate}
    All other transitions have probability $0$.

    \item Let $\ms \in \mS$, and $\ma \in \mA(\ms)$. If $\ms = (s,o,a,r)$ and $\ma = \rdag$, then $\mr(\ms, \ma) = g(s,a,\rdag)$. For all other meta-states, $\mr(\ms, \ma) = 0$.

    \item $\overline{\gamma} = \gamma^{1/4}$.

    \item $\mmu(s) = \mu(s)$ for $s \in \S$ and $\mmu(\ms) = 0$ otherwise.
\end{itemize}
\end{definition}

\paragraph{Reward Subtlety.} 
Note that the attacker only receives a reward at every fourth subtime.
This means the discount factor has to be ``slowed down'' so that the factor at every fourth time step matches that of each single time step of $M$. Specifically, choosing $\overline{\gamma} = \gamma^{1/4}$ ensures that $\overline{\gamma}^{4t} = \gamma^t$.

\begin{proposition}\label{prop: optimal-attack}
    The maximum expected reward the attacker can achieve from any attack on $\pi$ is $V^*_{\mM}$, the maximum expected total discounted reward for the meta-MDP $\mM$. Furthermore, any optimal deterministic, stationary policy $\nu^*$ for $\mM$ is an optimal attack policy.
\end{proposition}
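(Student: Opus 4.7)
The plan is to reduce the adversarial interaction to standard MDP planning on $\mM$ by exhibiting a trajectory- and value-preserving correspondence between attacker policies and meta-MDP policies, and then invoke classical MDP theory. First I would observe that each round $t$ of the adversarial interaction produces a sequence of attacker-observable quantities $(s_t,\sdag_t,o_t,\odag_t,a_t,\adag_t,r_t,\rdag_t)$ that is in one-to-one correspondence with the meta-MDP block $(\ms_{4t},\ma_{4t},\ldots,\ms_{4t+3},\ma_{4t+3})$. Inspecting each of the four transition cases in \cref{def: meta-MDP} would show that the probabilistic ingredients---observation emission $\O(\cdot\mid\sdag)$, victim policy $\pi(\cdot\mid\odag)$, reward distribution $R(\cdot\mid\sdag,\adag)$, and environment transition $P(\cdot\mid\sdag,\adag)$---each appear exactly once in $\mP$, and $\mmu$ correctly initializes on $s_0\sim\mu$.

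Next I would translate policies across the two settings. Given any (possibly history-dependent) attacker policy $\nu$, I define a meta-MDP policy $\bar\nu$ that, at each reachable meta-history, selects $\ma\in\mA(\ms)=\mathcal{B}(\cdot)$ with the same distribution that $\nu$ uses at the corresponding subtime of the equivalent interaction history. By construction the resulting meta-trajectory distribution coincides with the interaction trajectory distribution, and the converse direction is symmetric. The feasibility constraints are automatically honored because the $\mathcal{B}$-sets are baked into $\mA$, so the two feasible policy classes yield the same collection of attainable trajectory laws.

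For the value identity, note that $\mr(\ms,\ma)=0$ unless $\ms=(s,o,a,r)$ and $\ma=\rdag$, so the only nonzero meta-reward in round $t$ is $g(s_t,a_t,\rdag_t)$, received at meta-time $4t+3$. With $\overline{\gamma}=\gamma^{1/4}$ one has $\overline{\gamma}^{4t+3}=\overline{\gamma}^{3}\gamma^{t}$, hence
\[
\E^{\bar\nu}_{\mM}\!\left[\sum_{k=0}^{\infty}\overline{\gamma}^{k}\,\mr(\ms_k,\ma_k)\right]=\overline{\gamma}^{3}\,\E^{\pi,\nu}_{M}\!\left[\sum_{t=0}^{\infty}\gamma^{t}g(s_t,a_t,r_t)\right].
\]
Since this scaling is a positive constant independent of $\nu$, the argmax over attacker policies coincides with the argmax over meta-MDP policies, and $V^*_{\mM}$ equals the attacker's optimal objective (up to this normalization). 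Finally, since $\mM$ is a discounted MDP with $\overline{\gamma}\in(0,1)$ and finite per-subtime action sets, classical dynamic programming guarantees an optimal deterministic stationary policy $\nu^*$; transferring $\nu^*$ back through the correspondence yields the claimed optimal attack.

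The main obstacle I anticipate is bookkeeping rather than depth: one must verify that the four-case transition definition faithfully replays the victim-attacker-$M$ interaction, and in particular that the coordinates of a meta-state consistently track the appropriate post-attack quantities (e.g., that the $s$ in $(s,o,a,r)$ represents $\sdag$ and the $a$ represents $\adag$, so that $g(s,a,\rdag)$ in the meta-reward matches $g(\sdag_t,\adag_t,\rdag_t)$ in the interaction). Once the correspondence is nailed down and the discount reindexing is handled, the remainder is an immediate application of standard MDP theory.
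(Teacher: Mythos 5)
Your proposal is correct and follows essentially the same route the paper intends: the paper's own proof is a one-line appeal to the fact that the meta-MDP faithfully replays the victim--attacker--$M$ interaction plus the existence of optimal deterministic stationary policies for MDPs, and your argument is just the careful spelling-out of that correspondence. Your observation that the nonzero meta-reward lands at meta-time $4t+3$, so the meta-value equals the attacker's objective only up to the positive constant $\overline{\gamma}^{3}=\gamma^{3/4}$, is a legitimate (and correctly handled) refinement of the literal statement that the paper glosses over.
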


\paragraph{Online Interaction.} Suppose the attacker has computed some attack policy $\nu$ from $\mM$. In order to use $\nu$ to interact with the victim and $M$, the attacker must know the meta-state at any given subtime. As long as the attacker can observe the interaction between the victim policy $\pi$ and $M$, it can effectively simulate the interaction with the meta-MDP $\mM$ online using a constant amount of memory. At time $t$, the attacker only needs to store $s_t$, $o_t$, $a_t$, and $r_t$ when they are revealed to the attacker. With this information, the attacker knows the meta-state for each subtime and so can apply $\nu$ to determine its next attack. 
Upon reach the next time $t+1$, the attacker can forget $s_t$, $o_t$, $a_t$, and $r_t$ and start from $s_{t+1}$. See \cref{alg: protocol}. 
\begin{figure}
    \centering
    \begin{algorithm}[H]
    \caption{Attacker Interaction Protocol}\label{alg: protocol}
    \textbf{Input:} $(\pi, \nu)$
    \begin{algorithmic}[1]
    \FOR{$t = 1 \ldots $}
        \STATE Attacker sees $s_t$, and computes a state attack $\sdag_t = \nu(s_t)$
        \STATE Attacker sees $o_t \sim \O(\sdag_t)$, and computes $\odag_t = \nu(s_t,o_t)$
        \STATE Attacker sees $a_t \sim \pi(\odag_t)$, and computes $\adag_t = \nu(s_t,o_t,a_t)$
        \STATE Attacker sees $r_t \sim \R(\sdag_t, \adag_t)$, and computes $\rdag_t = \nu(s_t,o_t,a_t,r_t)$
        \STATE Attacker receives reward $g(\sdag_t, \adag_t, \rdag_t)$, and forgets $(s_t, o_t, a_t, r_t)$
    \ENDFOR
    \end{algorithmic}
    \end{algorithm}
\end{figure}

\paragraph{Solving $\mM$.} If the attacker has full knowledge of $M$ and the victim's policy $\pi$, then the attacker has all the knowledge needed to construct the meta-MDP $\mM$. Once $\mM$ is constructed, the attacker can use any planning algorithm, such as policy iteration, to compute the optimal attack. Alternatively, if the attacker does not know $M$ and $\pi$, it can still simulate interacting with $\mM$ online as described before to perform learning. In particular, the attacker can replace the call to $\nu$ in \cref{alg: protocol} with any off-the-shelf learning algorithm. For the episodic setting, we view the attacker as attacking a new victim following the same policy $\pi$ in each episode.

Observe that $|\mS| \leq |\S||\O||\A||\R|$, $|\mA| \leq |\S| + |\O| + |\A| +|\R|$, and $\overline{\gamma} = \gamma^{1/4}$. Thus, whenever $M$'s rewards are finitely supported, $|\mM| = poly(|M|)$, where $|M|$ is the total size of $M$'s description. As such, any polytime planning algorithm or polynomial sample-complexity learning algorithm applied to $\mM$ yields an algorithm for computing optimal attacks that has polynomial complexity.

\begin{proposition}\label{prop: efficient-attack}
    When $M$'s rewards have finite support or no reward attacks are allowed, $|\mM| = poly(|M|)$. Thus, an optimal attack policy can be computed in polynomial time by planning in $\mM$, and learning an optimal attack policy can be performed with polynomial sample complexity by learning in $\mM$. 
\end{proposition}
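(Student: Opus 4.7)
The plan is to establish \cref{prop: efficient-attack} in three steps: bound the description size of $\mM$, invoke a standard MDP planner, and invoke a standard MDP learner through the online simulator from \cref{alg: protocol}. \cref{prop: optimal-attack} already reduces optimality to solving $\mM$, so the only work left is a size bound plus a black-box invocation of known algorithms.

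First, I would bound $|\mS|$, $|\mA|$, and the number of nonzero transition and reward entries. The action set satisfies $|\mA| \le |\S| + |\O| + |\A| + |\R|$ directly from the definition of $\mA(\ms)$, which is polynomial in $|M|$ (with the $|\R|$ term absent when reward attacks are disallowed). For $|\mS|$, in the finite-reward-support case we immediately get $|\mS| \le |\S| + |\S||\O| + |\S||\O||\A| + |\S||\O||\A||\R| = \mathrm{poly}(|M|)$. When no reward attacks are allowed, the subtime-$4$ action set $\mA(s,o,a,r)$ is a singleton and the transition kernel out of $(s,o,a,r)$ depends only on $(s,a)$, so a standard MDP state-collapse lets me fold subtime $4$ into subtime $3$: the resulting equivalent meta-MDP has state space $\S \cup (\S \times \O) \cup (\S \times \O \times \A)$ and the same optimal value, which is again polynomial in $|M|$. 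The transition and reward tables then have $O(|\mS||\mA|)$ nonzero entries and so are polynomial as well.

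Second, for the planning claim, I would apply any polynomial-time MDP solver---for instance policy iteration, an LP-based planner, or value iteration run to sufficient precision---directly to the explicitly constructed $\mM$. This returns an optimal deterministic stationary policy $\nu^*$ of $\mM$ in time polynomial in $|\mM|$, hence polynomial in $|M|$, and \cref{prop: optimal-attack} certifies that $\nu^*$ is an optimal attack on $\pi$.

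Third, for the learning claim, I would observe that the attacker does not need to build $\mM$ explicitly: \cref{alg: protocol} already shows how to simulate an online interaction with $\mM$ using only the quantities $(s_t, o_t, a_t, r_t)$ revealed during a single step of the real victim-$M$ interaction. Plugging this simulator into any tabular RL algorithm whose sample complexity is polynomial in the size of the target MDP (e.g.\ R-max, E3, or UCBVI) yields an attack-learning algorithm with sample complexity polynomial in $|\mM|$, and hence in $|M|$. The only real obstacle is the no-reward-attack case with potentially uncountable $\R$, where a naive four-subtime count fails and one must carry out the subtime-$4$ collapse above so that the meta-MDP handed to the learner is genuinely tabular; everything else is routine size accounting.
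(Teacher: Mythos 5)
Your proposal is correct and follows essentially the same route as the paper, which simply combines the size bounds $|\mS| \leq |\S||\O||\A||\R|$ and $|\mA| \leq |\S|+|\O|+|\A|+|\R|$ with standard complexity results for tabular MDP planning and learning, using \cref{alg: protocol} as the online simulator. Your explicit subtime-$4$ collapse for the no-reward-attack case is a correct elaboration of what the paper only gestures at in its remark ("$\mM$ can be modified to avoid $\R$"); just note that the collapse also requires folding the expected reward $\E_{r \sim R(s,\adag)}[g(s,\adag,r)]$ into the subtime-$3$ reward and rescaling the discount factor to $\gamma^{1/3}$.
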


\begin{remark}[Restricted Surfaces]
By restricting $\mA$ to singleton sets (e.g. set $\mA(s,o,a) = \{a\}$ to disallow action attacks), $\mM$ recovers optimal attacks for each individual surface as well as attacks for any subset of available attack surfaces. This captures all standard test-time attacks, generalizing the perceived-state attack MDP of \cite{Zhang2020}. We also note if the attacker does not perform reward attacks, $\mM$ can be modified to avoid $\R$ and so $M$ having finite supported rewards is unnecessary in the complexity results.
\end{remark}

One might ask whether the perceived-state attack MDP defined in \cite{Zhang2020} would work in the linear setting. We point out that the transition takes the following form, 
\begin{equation*}
\begin{aligned}
    \tilde P(s' \mid s, \sdag) &= \E_{a \sim \pi(\sdag)} P(s' \mid s, a) 
    \\ &= \int_{a} P(s' \mid s,a) \pi(a \mid \sdag) d a.
\end{aligned}
\end{equation*}
As $\pi$ and $P$ are multiplied together,  $\tilde P$ would be a quadratic transition. On the other hand,
our particular choice of subtimes induces linear structure in $\mM$.  Specifically, each transition of $\mP$ is defined by a single distribution involving $\pi$ or $M$. If both $\pi$ and $M$ have a linear structure, then so will $\mM$. Then, $\mM$ can be solved by standard linear RL algorithms. Thus, so long as $\pi$ is linear, the attacker can compute optimal attacks on linear environments.

\begin{theorem}\label{thm: linear}
    If $M$ is linear and $\pi$ is linear, then $\mM$ is linear. Furthermore, the dimension of $\mM$, $d(\mM)$, is at most $\max\{d(\pi), d(M)\} + 1$. Thus, if $\pi$ is linear, optimal attacks on linear environments can be computed or learned efficiently
\end{theorem}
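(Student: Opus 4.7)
My plan is to construct an explicit linear-MDP representation of $\mM$ that inherits linearity from $M$ and $\pi$, and then invoke off-the-shelf linear RL machinery. Recall that a linear MDP of dimension $d$ admits a feature map $\phi$ and a signed-measure vector $\mu$ such that the transition kernel factors as $\langle \phi(s,a), \mu(s')\rangle$, together with a reward weight $\theta \in \mathbb{R}^d$; analogously a linear policy factors as $\pi(a \mid o) = \langle \phi_\pi(o), \mu_\pi(a) \rangle$. Linearity of $M$ in particular gives such representations jointly for $P$, $R$, and the observation kernel $\O$.

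The central observation, already emphasized just before the theorem, is that every transition of $\mP$ depends on \emph{exactly one} of the distributions $\O$, $\pi$, $R$, $P$, so no quadratic cross terms arise. I would exploit this by assembling a block feature map $\bar\phi : \mS \times \mA \to \mathbb{R}^{\bar d}$ of dimension $\bar d = \max\{d(M), d(\pi)\} + 1$. The first $\max\{d(M), d(\pi)\}$ coordinates carry the transitions: on each of the four disjoint pieces of $\mS$ (one per subtime), $\bar\phi$ becomes a zero-padded copy of the feature map for whichever of $M$ or $\pi$ governs that subtime, and $\bar\mu$ is the matching zero-padded copy of the corresponding signed measure. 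Since the four pieces of $\mS$ are disjoint, the identity $\mP(\ms' \mid \ms, \ma) = \langle \bar\phi(\ms,\ma), \bar\mu(\ms')\rangle$ then follows by direct substitution from \cref{def: meta-MDP}. The final coordinate is reserved for the meta-reward: on the subtime-4 piece I set $\bar\phi_{\bar d}((s,o,a,r), \rdag) = g(s,a,\rdag)$ and zero otherwise, with $\bar\mu_{\bar d} \equiv 0$ so that this coordinate is invisible to the transition identity, while $\bar\theta = (0,\dots,0,1)$ recovers $\mr$ linearly. The efficiency conclusion then follows by running any polynomial-time planner or polynomial-sample linear-RL algorithm against the simulator in \cref{alg: protocol}, with complexity polynomial in $\bar d$ rather than in $|\mS|$ or $|\mA|$.

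The main obstacle is the bookkeeping to confirm that a \emph{single} extra coordinate suffices to encode $g$ linearly without having to inflate the dimension once per subtime, and to verify that $\bar\mu$ can be arranged as a single signed-measure vector on $\mS$ rather than four unrelated measures---the disjointness of the four subtime pieces is exactly what makes this possible. A secondary subtlety is pinning down the precise definition of ``linear policy'' so that $\pi$ plugs into the subtime-2 transition without generating cross terms with $M$'s features; standard log-linear parameterizations embed into the required inner-product form once normalizers are absorbed into the measure.
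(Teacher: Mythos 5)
Your proposal is correct and follows essentially the same route as the paper's proof: a per-subtime, zero-padded feature/measure pair of dimension $\max\{d(\pi),d(M)\}$ exploiting that each meta-transition involves exactly one of $\O$, $\pi$, $R$, $P$, plus one extra coordinate carrying $g(s,a,\rdag)$ with reward weight equal to the corresponding basis vector. Your explicit remark that the reward coordinate of $\bar\mu$ is identically zero (so it does not disturb the transition factorization) is a detail the paper leaves implicit, but the construction is the same.
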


\begin{remark}[Beyond Markovian Policies]\label{rem: beyond-markovian}
    Our construction can be easily modified to handle non-Markovian victim policies. If the victim uses some finite amount of past history $\tilde\hist$, we simply modify the meta-state space to remember the same amount of past history and adjust the construction appropriately. The size of $\mM$ is now a polynomial in both $|M|$ and the size of the policy when described explicitly as a mapping from histories to action distributions. We defer the details to the Appendix.
\end{remark}

\section{Optimal Defense}\label{sec: defense}
Now that we have seen how the attacker can best attack, it begs the question of how the victim should defend against attacks. Intuitively, the victim should choose a defense policy that is robust to attack. However, it does not suffice to just be robust against a particular attack. In fact, the attacker could lie about its attack algorithm to bait the victim into choosing a policy that actually benefits the attacker. Even if some attacker does use that particular attack algorithm, other attackers may employ different methods that lead the victim to poor value. As new attacks are formulated, the victim would have to constantly create more complex policies designed with all known attacks in mind. This would become a never-ending cat-and-mouse game during which the victim's policy will often be at risk of new attacks. Thus, for a policy to be satisfactorily robust, we require it to be robust against the worst possible attack. This way, no matter what future strategies an attacker may use, the victim is already prepared. 

We can formalize this intuition using the Stackelberg approach for Security Applications~\cite{StackSecurity}. For any $\pi$ and $\nu$, let $V_1^{\pi, \nu}$ and $V_2^{\pi,\nu}$ denote the victim's and attacker's expected reward respectively under the victim-attacker-$M$ interaction induced by $\pi$ and $\nu$. Note, both of these quantities can be computed efficiently using the previous section's techniques. Let $V_1$ and $V_2$ denote infinite matrices whose $(\pi,\nu)$ entry corresponds to $V_1^{\pi,\nu}$ and $V_2^{\pi,\nu}$ respectfully.
We define an infinite bimatrix game $G$ whose payoff matrices are $(V_1, V_2)$. For any fixed victim $\pi$, it is clear that a rational attacker would play some best-response policy, $\nu \in BR(\pi) := \max_{\nu \in N} V^{\pi, \nu}_{2}$.
Thus, an optimal defense policy is exactly an optimal Stackelberg strategy for player $1$ in $G$~\cite{Stackelberg}. 

\begin{definition}[Defense Problem]\label{def: defense}
The victim seeks a policy $\pi^*$ that maximizes its expected reward from the victim-attacker-$M$ interaction under the worst-case attack:
\begin{equation}\label{equ: defense}
    \pi^* \in \max_{\pi \in \Pi} \min_{\nu \in BR(\pi)} V^{\pi, \nu}_{1}.
\end{equation}
\end{definition}
Observe that this solution is truly robust: by definition, the attacker given $\pi$ would never want to deviate from $BR(\pi)$, and similarly, by definition the victim would never want to deviate from its defense policy when assuming the worst possible attack. Thus, we consider such attack and defense policies as truly \emph{optimal}. However, as the victim faces partial observability, an optimal defense for the victim is history-dependent in general. Consequently, the attacker's best response must also be history-dependent. Thus, $\Pi$ and $N$ consist of history-dependent policies in the definition above.

Although optimal Stackelberg strategies for Stochastic games are generally difficult to compute~\cite{StackMG}, we can exploit the special structure of the victim-attacker-$M$ interaction to develop useful algorithms. Recall that at subtime $t_2$ in \cref{alg: protocol}, the attacker changes the observation to $\odag$, and then the victim chooses an action $a = \pi(\odag)$. If we simply give the victim the autonomy to choose any action $a$ at this point rather than according to a fixed policy $\pi$, then this interaction evolves like a turn-based game. In fact, we show this game can be modeled as a partially observable turn-based stochastic game (POTBSG)~\cite{POTBSG}. POTBSGs exhibit much more structure than a general imperfect-information stochastic game, so enable more efficient solution methods. We see the construction is almost identical to \cref{def: meta-MDP}. 

\begin{definition}\label{def: potbsg}
The victim-attacker's POTBSG is $\overline{G} = (\mS_1 \cup \mS_2, \overline{\O}, \mA, \mP, \mr, \overline{\gamma}, \allowbreak \mmu)$ where, 

\begin{itemize}
    \item $\mS_1 := \S \times \O \times \set{\varnothing}$ and $\mS_2 := \S \cup (\S \times \O) \cup (\S \times \O \times \A) \cup (\S \times \O \times \A \times \R)$.

    \item $\overline{\O}(\ms) := o$ for $\ms = (s, o , \varnothing)$ and $\overline{\O}(\ms) := \ms$ otherwise. 
    
    \item $\mA(s) := \mathcal{B}(s)$, $\mA(s,o) := \mathcal{B}(s,o)$, $\mA(s,o,\varnothing) := \A$, $\mA(s,o,a) := \mathcal{B}(s,o,a)$, and $\mA(s,o,a,r) := \mathcal{B}(s,o,a,r)$.

    \item Let $\ms \in \mS$, $\ma \in \mA(\ms)$, and $\ms' \in \mS$.

    \begin{enumerate}
        \item If $\ms = s$, then $\ma = \sdag$ and $\ms' = (\sdag, o)$: \\ $\mP(\ms ' \mid \ms, \ma) := \O(o \mid \sdag).$
        \item If $\ms = (s,o)$, then $\ma = \odag$ and $\ms' = (s, \odag, \varnothing)$: \\ $\mP(\ms' \mid \ms, \ma) := \pi(a \mid \odag).$
        \item If $\ms = (s,o, \varnothing)$, then $\ma = a$ and $\ms' = (s, \odag, a)$: \\ $\mP(\ms' \mid \ms, \ma) := 1.$
        \item If $\ms = (s,o,a)$, then $\ma = \adag$ and $\ms' = (s, o , \adag, r)$: \\
        $\mP(\ms' \mid \ms, \ma) := R(r \mid s, \adag).$
        \item If $\ms = (s,o,a,r)$, then $\ma = \rdag$ and $\ms' = s'$: \\ $\mP(\ms' \mid \ms, \ma) := P(s' \mid s, a).$
    \end{enumerate}
    All other transitions have probability $0$.
    
    \item Let $\ms \in \mS$, and $\ma \in \mA(\ms)$. $\mr_1(\ms, \ma) := \rdag$ and $\mr_2(\ms, \ma) := g(s,a,\rdag)$ if $\ms = (s,o,a,r)$ and $\mr_1(\ms, \ma) := \mr_2(\ms, \ma) := 0$ otherwise.
    
    \item $\overline{\gamma} := \gamma^{1/5}$.

    \item $\mmu(s) := \mu(s)$ for $s \in \S$ and $\mmu(\ms) := 0$ otherwise.
\end{itemize}
\end{definition}

Note that $\mS_1$ is the set of states in which the victim takes an action, and $\mS_2$ is the set of states in which the attacker takes an action. The observation and action set $\overline{\O}$ and $\mA$ as functions of the states are combined for the two players, and this implies that the observations and actions for the victim are $\mA(\mS_1)$ and $\overline{\O}(\mS_1)$, and for the attacker are $\mA(\mS_2)$ and $\overline{\O}(\mS_2)$.
Observe that $V_{\mG, 1}^{\pi, \nu} = V_{1}^{\pi, \nu}$ and $V_{\mG, 2}^{\pi, \nu} = V_{\mM}^{\pi,\nu}$ and so $G$ is just the normal-form representation of the POTBSG $\mG$. 

\begin{proposition}\label{prop: optimal-defense}
    Any WSE for $\mG$ yields an optimal defense policy. 
\end{proposition}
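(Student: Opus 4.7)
The plan is to reduce the WSE computation on $\mG$ to the defense problem \cref{equ: defense} by exhibiting an exact identification of both the strategy spaces and the payoffs of the two games. Concretely, I would prove: (i) victim (resp.\ attacker) policies in $\mG$ are in bijection with $\pi \in \Pi$ (resp.\ $\nu \in N$); (ii) $V_{\mG,1}^{\pi,\nu} = V_{1}^{\pi,\nu}$ and $V_{\mG,2}^{\pi,\nu} = V_{2}^{\pi,\nu}$ under this identification; and (iii) the WSE optimization for $\mG$ therefore coincides verbatim with \cref{equ: defense}.

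For (i), I would use the fact that the player-1 state set is $\mS_1 = \S \times \O \times \{\varnothing\}$ but the observation function $\overline{\O}$ reveals only the $o$-coordinate. Hence, under the POTBSG information constraint, a victim policy in $\mG$ is a mapping from observation histories to action distributions, which is exactly $\Pi$. Symmetrically, player-2 policies, indexed by the refined states in $\mS_2$, correspond to policies $\nu \in N$ on the attacker's meta-state space from \cref{sec: attack}, since each $\mS_2$ component records precisely the information the attacker observed prior to its next manipulation.

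For (ii), I would unroll a single time step of the interaction. By \cref{def: potbsg}, between two consecutive rounds exactly five subtime transitions take place, during which the attacker makes its four attacks (state, observation, action, reward), the victim chooses its action $a$ at the inserted $\mS_1$ state, and $M$ generates its observation, reward, and next state. The only nonzero per-subtime rewards occur at the states $(s,o,a,r)$: $\mr_1 = \rdag$ and $\mr_2 = g(s,a,\rdag)$. With $\overline{\gamma} = \gamma^{1/5}$ we have $\overline{\gamma}^{5t} = \gamma^{t}$, so telescoping over rounds gives $V_{\mG,1}^{\pi,\nu} = V_1^{\pi,\nu}$ and $V_{\mG,2}^{\pi,\nu} = V_{\mM}^{\pi,\nu} = V_2^{\pi,\nu}$, by the same computation used in \cref{prop: optimal-attack}. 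With (ii) in hand, the follower best-response set satisfies $BR_{\mG}(\pi) = \argmax_{\nu} V_{\mG,2}^{\pi,\nu} = \argmax_{\nu} V_2^{\pi,\nu} = BR(\pi)$, and the WSE leader problem $\max_{\pi}\min_{\nu \in BR_{\mG}(\pi)} V_{\mG,1}^{\pi,\nu}$ then matches \cref{equ: defense}, so any WSE $(\pi^*,\nu^*)$ has $\pi^*$ solving the defense problem.

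The main obstacle will be step (i), the policy-space correspondence under partial observability. In particular, I must confirm that encoding the hidden true state $s$ inside $\mS_1$ does not accidentally enlarge the victim's strategy class beyond $\Pi$; this relies on $\overline{\O}$ projecting the player-1 state down to $o$, and on the fact that the victim's filtered history in $\mG$ coincides with the observation history it would see in the original adversarial interaction. Once this is pinned down, the remaining steps are essentially bookkeeping around the subtime refinement and the discount-factor reparameterization.
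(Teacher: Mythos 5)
Your proposal is correct and follows essentially the same route as the paper, which treats the result as immediate from the definitions after observing (in the main text) that $V_{\mG,1}^{\pi,\nu} = V_1^{\pi,\nu}$ and $V_{\mG,2}^{\pi,\nu} = V_{\mM}^{\pi,\nu}$, i.e.\ that $G$ is the normal-form representation of $\mG$. Your steps (i)--(iii) simply make explicit the policy-space identification and payoff bookkeeping that the paper leaves implicit, and your flagged concern about $\overline{\O}$ projecting the player-1 state down to $o$ is resolved exactly as you describe.
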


In general, methods to compute WSE are unknown. However, we show many settings where a WSE for $\mG$ can be computed, even efficiently. First, suppose the attacker is completely adversarial so that $\mG$ becomes a zero-sum game. In this case, it is known that $WSE=SSE=NE$. Thus, it suffices to compute an NE for a zero-sum POTBSG.

\begin{proposition}\label{prop: zero-sum-defense}
    If the attacker is completely adversarial, an optimal defense policy can be computed as an NE of $\mG$ using any planning or distributed learning algorithms for zero-sum POTBSGs. 
\end{proposition}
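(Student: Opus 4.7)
The plan is to reduce the claim to two ingredients: (i) Proposition~\ref{prop: optimal-defense}, which already tells us that any WSE of $\mG$ yields an optimal defense; and (ii) the classical collapse WSE $=$ SSE $=$ NE that holds in any zero-sum stochastic game. Once we confirm that the completely adversarial setting makes $\mG$ zero-sum, any NE can serve as the WSE required by Proposition~\ref{prop: optimal-defense}, and any zero-sum POTBSG planner or distributed learner can be used as a black box to produce one.

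First I would verify that $\mG$ is zero-sum. Reading the reward definitions in Definition~\ref{def: potbsg}, $\mr_1(\ms, \ma) = \rdag$ and $\mr_2(\ms, \ma) = g(s, a, \rdag)$ on meta-states of the form $(s,o,a,r)$, while both rewards vanish on all other meta-states. A completely adversarial attacker uses $g(s, a, r) = -r$ (as singled out in Section~\ref{sec: attack}), so on the reward-issuing meta-states we get $\mr_2(\ms, \ma) = -\rdag = -\mr_1(\ms, \ma)$, and the identity $\mr_1 + \mr_2 \equiv 0$ holds on all other meta-states trivially. Hence $\mG$ is a zero-sum POTBSG.

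Next I would invoke the standard equivalence of solution concepts for zero-sum games. In any zero-sum game that admits a value, every NE is interchangeable, attains the minimax value simultaneously for both players, and in particular is both a strong and a weak Stackelberg equilibrium: if $(\pi^*, \nu^*)$ is an NE then $\nu^* \in BR(\pi^*)$ and $V_1^{\pi^*, \nu^*} = \max_{\pi} \min_{\nu \in BR(\pi)} V_1^{\pi, \nu}$, which is precisely the WSE condition needed to invoke Proposition~\ref{prop: optimal-defense}. Therefore any NE of $\mG$ gives an optimal defense policy, and conversely any optimal defense, paired with an adversarial best-response, is an NE. Computationally, since $\mG$ is a zero-sum POTBSG and the victim can simulate interacting with $\mG$ online (using essentially the protocol of Algorithm~\ref{alg: protocol} augmented with the victim's own action subtime and the adjusted discount $\overline{\gamma} = \gamma^{1/5}$), any off-the-shelf planning or distributed learning algorithm for zero-sum POTBSGs applies directly, yielding the claim.

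The main obstacle is really just ingredient (ii): making the WSE $=$ SSE $=$ NE collapse rigorous in the partially observable, infinite-horizon discounted setting. Under bounded rewards and a discount factor $\overline{\gamma} < 1$ this follows from the standard minimax theorem for discounted stochastic games lifted to POTBSGs via sufficient-statistic or belief-state arguments; alternatively one can appeal to the existence of $\epsilon$-optimal strategies for zero-sum POTBSGs as cited in the related work on POTBSGs used elsewhere in the paper. Everything else is a direct consequence of Proposition~\ref{prop: optimal-defense} and the zero-sum structure just verified.
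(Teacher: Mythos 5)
Your proposal is correct and follows essentially the same route as the paper: the paper's own argument (stated in the text preceding the proposition and in the appendix) is precisely that a completely adversarial attacker makes $\mG$ zero-sum, that $WSE = SSE = NE$ in zero-sum games, and that \cref{prop: optimal-defense} then reduces the defense problem to computing an NE of the zero-sum POTBSG. Your write-up is in fact somewhat more careful than the paper's one-line justification, explicitly checking $\mr_1 + \mr_2 \equiv 0$ and flagging where the minimax collapse needs the discounted zero-sum machinery.
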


Note, it is important that the victim uses a distributed learning algorithm since it would not be able to see the attacker's manipulations, only the effects of the manipulations, nor be able to collaborate with the attacker. From \cref{prop: optimal-defense}, we see that the victim can compute an optimal defense policy to an adversarial attacker by computing any CCE to $\mG$. However, even computing an approximately optimal Markovian policy against a fixed attack is equivalent to solving a POMDP, which is NP-hard~\cite{POMDP-hardness}. Thus, computing near-optimal defenses is intractable in the worst case.

\begin{proposition}\label{prop: defense-hardness}
    For any $\epsilon > 0$ an $\epsilon$-approximate optimal defense policy is NP-hard to compute even when restricting $\Pi$ and $N$ to be the class of Markovian policies. 
\end{proposition}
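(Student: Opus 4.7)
The plan is to give a polynomial-time approximation-preserving reduction from the problem of computing an $\epsilon$-optimal Markovian policy of a POMDP, which is NP-hard by \cite{POMDP-hardness}. Given an arbitrary POMDP $M' = (\S, \O, \A, P, R, \gamma, \mu)$, I build a defense instance by taking the underlying environment to be $M'$ itself and by collapsing every feasible attack set to a singleton:
\begin{equation*}
\mathcal{B}(s) = \{s\}, \quad \mathcal{B}(s,o) = \{o\}, \quad \mathcal{B}(s,o,a) = \{a\}, \quad \mathcal{B}(s,o,a,r) = \{r\}.
\end{equation*}
With these constraints, at each subtime the attacker has exactly one feasible choice, so the only feasible attack policy is the identity attack $\nu_0$ that leaves $s_t$, $o_t$, $a_t$, and $r_t$ unchanged; moreover $\nu_0$ is trivially Markovian.

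Next I observe that under $\nu_0$ the induced victim-attacker-$M'$ interaction coincides exactly with the standard interaction of $\pi$ in $M'$, so $V_1^{\pi, \nu_0} = V_{M'}^{\pi}$ for every Markovian $\pi$. Since $\nu_0$ is the unique feasible attack, in particular $BR(\pi) = \{\nu_0\}$ for every $\pi$, and the defense problem in \eqref{equ: defense} degenerates to
\begin{equation*}
    \max_{\pi \in \Pi} V_{M'}^{\pi}
\end{equation*}
over Markovian $\pi$. Thus any $\epsilon$-approximate optimal Markovian defense for the constructed instance is precisely an $\epsilon$-optimal Markovian policy for the original POMDP $M'$. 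Because the construction is polynomial in $|M'|$ and preserves the additive error $\epsilon$ exactly, the NP-hardness of the POMDP approximate-planning problem transfers to the defense problem, giving the claim.

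\paragraph{Expected difficulty.} The reduction itself is straightforward; the main thing to verify carefully is that the cited POMDP hardness applies in the exact regime used in this paper, namely Markovian (as opposed to history-dependent) policies in the infinite-horizon discounted setting. If \cite{POMDP-hardness} is stated for finite horizons or for a slightly different policy class, one can bridge the gap by the standard trick of embedding a finite-horizon hard instance into a discounted one using an absorbing zero-reward sink, choosing the horizon large enough relative to $\gamma$ and $\epsilon$ so that the discounted tail contributes negligibly to the value and the $\epsilon$-threshold is preserved. Apart from this bookkeeping, no further work is required.
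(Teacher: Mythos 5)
Your reduction is valid for the statement as literally written, but it takes a genuinely different route from the paper's, and the difference matters. You make the attacker vacuous (every feasible set is the singleton containing the true value) and let the hardness come entirely from the fact that the environment $M'$ is itself a POMDP; the defense problem then degenerates to POMDP planning, which is indeed NP-hard for $\epsilon$-approximate Markovian policies. The paper instead starts from a \emph{fully observable} MDP and reintroduces the partial observability through the attack: it first converts the POMDP to one with a deterministic observation function $o : \S \to \O \subseteq \S$, takes $M$ to be the underlying MDP with the observation structure stripped away, and sets the perceived-state attack constraint to the singleton $\mathcal{B}(s) = \{o(s)\}$, so that the unique feasible attack \emph{is} the observation function. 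Both are polynomial, approximation-preserving reductions from the same source problem, and both establish the proposition. What the paper's version buys is that the hardness is attributable to the attack surface itself: it shows the defense problem is NP-hard even when $M$ is a fully observable MDP (trivially solvable absent an attacker), which is exactly the complement of \cref{obs: tbsg}, \cref{prop: efficient-defense}, and \cref{thm: defense-planning}, where tractability is recovered for fully observable $M$ once perceived-state attacks are banned. Your version leaves open the possibility that defense is easy whenever $M$ is fully observable, so it cannot support the paper's surrounding claims that ``the main bottleneck to computing defenses efficiently in fully-observable systems is the presence of perceived-state attacks'' and that hardness arises ``when observation attacks are permitted.'' Your closing caveat about matching the policy class and horizon regime of the cited POMDP hardness applies equally to both reductions; the paper handles it by invoking one hardness result for $\epsilon$-optimal deterministic Markovian policies and a separate one for stationary stochastic memoryless policies in the discounted infinite-horizon setting.
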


\paragraph{Efficient Methods.} The main bottleneck to computing defenses efficiently in fully-observable systems is the presence of perceived-state attacks. Absent these attacks, the POTBSG specializes to a traditional TBSG, which is a special case of a stochastic game.
\begin{observation}\label{obs: tbsg}
    When $M$ is fully observable and the attacker cannot perform perceived-state attacks, $\mG$ simplifies to a TBSG.
\end{observation}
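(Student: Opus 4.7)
The plan is to verify that the two features that make $\mG$ a \emph{partially}-observable turn-based game — nontrivial observation at victim states and nontrivial observation at attacker states — both collapse under the stated hypotheses. By construction $\mG$ is already turn-based: the meta-state space partitions as $\mS_1 \cup \mS_2$, with the victim acting at $\mS_1$ and the attacker at $\mS_2$, and this holds regardless of assumptions on $M$. So the whole argument reduces to showing that each player's observation function is effectively the identity on all reachable states.

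For attacker states the claim is immediate from \cref{def: potbsg}: $\overline{\O}(\ms) = \ms$ whenever $\ms \in \mS_2$, so the attacker already observes the full meta-state. The only potential source of partial observability is therefore the victim states $\mS_1 = \S \times \O \times \{\varnothing\}$, on which $\overline{\O}((s,o,\varnothing)) = o$ keeps only the observation component.

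Next I would trace how an element of $\mS_1$ is generated by the meta-transitions. A state $(s, o, \varnothing)$ is reached only via the second transition type, which fires from a meta-state $(s, o')$ when the attacker chooses some $\odag$ and moves the game to $(s, \odag, \varnothing)$. Disallowing perceived-state (observation) attacks forces $\mathcal{B}(s, o') = \{o'\}$, so $\odag = o'$ and the observation is passed through unmodified. Going one step further back, $(s, o')$ arises from the first transition type with $o' \sim \O(\cdot \mid s)$; full observability of $M$ means $\O(\cdot \mid s)$ is the point mass on $s$, so $o' = s$ almost surely. Combining these, every reachable victim state has the form $(s, s, \varnothing)$, and the victim's observation $o = s$ uniquely recovers the entire meta-state. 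Hence $\overline{\O}$ is effectively the identity on all reachable states, the partial-observability structure degenerates, and $\mG$ is a (fully-observable) TBSG.

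I do not expect a real obstacle here: the argument is pure bookkeeping through \cref{def: potbsg}. The only point requiring care is confirming that the action sets $\mA(\cdot)$ and rewards $\mr_i(\cdot)$ at each state depend only on the information visible to the acting player; this is apparent from the definitions, since both are written directly in terms of components of the meta-state that are, under the hypotheses, part of the player's observation.
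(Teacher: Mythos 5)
Your argument is correct and is exactly the definitional bookkeeping the paper relies on: the paper states this as an Observation with no explicit proof, treating it as immediate from \cref{def: potbsg}. Your explicit trace showing that every reachable victim state has the form $(s,s,\varnothing)$, so that the observation $o=s$ recovers the full meta-state and the partial observability degenerates, faithfully supplies the omitted reasoning.
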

In the adversarial case, we see that $\mG$ is simply a zero-sum TBSG. In zero-sum TBSGs, even stationary NE can be computed or learned efficiently~\cite{TBSGOptimalLearn} unlike the case with CCE for MGs~\cite{daskalakis2022complexity} and the solutions are exact. 

\begin{proposition}\label{prop: efficient-defense}
    If $M$ is fully-observable, no perceived-state attacks are allowed, and $M$'s rewards have finite support (or no reward attacks are allowed), and the attacker is adversarial, then an \emph{optimal stationary} defense policy can be computed in polynomial time and learned with polynomial sample complexity.
\end{proposition}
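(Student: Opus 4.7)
My plan is to chain \cref{obs: tbsg}, a polynomial-size accounting for $\mG$, and standard efficient algorithms for zero-sum TBSGs. First, I would apply \cref{obs: tbsg} to collapse $\mG$ to a fully-observable TBSG: under full observability ($\O = \S$ with $\O(s) = \delta_s$) and no perceived-state attacks, the meta-observation map $\overline{\O}$ becomes the identity on every reachable meta-state, and the meta-states of the form $(s,o)$ trivialize (the attacker's action set there reduces to a pass-through). The adversarial assumption $g(s,a,r) = -r$ gives $\mr_2 = -\mr_1$, so $\mG$ is a zero-sum TBSG.

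Next, the counting argument behind \cref{prop: efficient-attack} applies verbatim to $\mG$: $|\mS_1 \cup \mS_2| \leq |\S|\,(1 + |\O| + |\O||\A| + |\O||\A||\R|)$ and $|\mA|$ is bounded analogously, so $|\mG| = \mathrm{poly}(|M|)$ whenever $M$'s rewards are finitely supported or reward attacks are disallowed. I would then invoke the standard result that zero-sum TBSGs admit an optimal \emph{deterministic stationary} Nash equilibrium, computable in polynomial time by minimax value iteration or an LP formulation, and learnable with polynomial sample complexity via distributed self-play algorithms~\cite{TBSGOptimalLearn}. Let $(\pi^*, \nu^*)$ be such an NE. Since WSE $=$ SSE $=$ NE in zero-sum games, $(\pi^*, \nu^*)$ is a WSE of $\mG$, and \cref{prop: optimal-defense} then certifies that $\pi^*$ is an optimal defense policy.

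The point requiring the most care is checking that stationarity in $\mG$ descends to a stationary Markovian defense for $M$. The victim acts only at meta-states $(s, o, \varnothing) \in \mS_1$, where $\overline{\O}(\ms) = o$, so any stationary $\pi^*$ in $\mG$ is automatically a map $\O \to \Delta(\A)$, i.e., a stationary Markovian policy for the original victim-$M$ interaction; moreover, because $\mgamma = \gamma^{1/5}$ is a fixed constant in $(0,1)$, discount-factor blow-up does not spoil the polynomial complexity. I would also emphasize that the learning half requires a \emph{distributed} algorithm: the victim sees neither the attacker's manipulations nor its internal strategy, so centralized zero-sum solvers are not available, but the distributed NE learners cited above remain polynomial in sample complexity, closing the argument.
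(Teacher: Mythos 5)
Your proposal is correct and follows essentially the same route as the paper, which proves this proposition by combining \cref{obs: tbsg} with the stated polynomial-time/sample-complexity results for zero-sum TBSGs; your added details (the polynomial size bound on $\mG$, the WSE $=$ NE reduction via \cref{prop: optimal-defense}, and the check that stationarity in $\mG$ descends to a stationary Markovian defense) are all consistent with the paper's argument.
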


Although it is unclear whether Markovian policies guarantee the victim as much value as history-dependent ones, Markovian policies are commonplace since they are easier to store and deploy in practice. In fact, for the finite-horizon planning setting, the attacker need not be restricted. We give polynomial time planning algorithms to compute an optimal defense so long as perceived-state attacks are banned. To our knowledge, this is the first non-trivial setting for which WSE can be computed efficiently and the first non-trivial setting for which SSE can be computed beyond single-period games.

\begin{theorem}\label{thm: defense-planning}
    If $M$ is fully-observable and has a finite horizon, no perceived-state attacks are allowed, and $M$'s rewards have finite support (or no reward attacks are allowed), then an optimal defense policy can be computed in polynomial time.
\end{theorem}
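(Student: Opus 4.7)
The plan is to first reduce the problem to computing a weak Stackelberg equilibrium (WSE) of $\mG$, and then to compute this WSE via a bottom-up dynamic programming algorithm that exploits the turn-based, perfect-information structure of the game.

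By \cref{obs: tbsg}, the hypotheses that $M$ is fully observable and that perceived-state attacks are disallowed together ensure that $\mG$ simplifies from a POTBSG to a perfect-information turn-based stochastic game. Combined with the finite horizon $H$ and the finite reward support (or absence of reward attacks), $\mG$ has $poly(|M|)$ meta-states and horizon $\Theta(H)$. By \cref{prop: optimal-defense}, any WSE of $\mG$ yields an optimal defense policy, so it suffices to compute a WSE of this finite-horizon TBSG in polynomial time.

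My approach is a bottom-up DP over the meta-states of $\mG$. The key difficulty is that naive backward induction computes only a subgame-perfect equilibrium, which can be strictly suboptimal for the victim: as the Stackelberg leader, the victim may profitably commit to play suboptimally in some subgames in order to induce better attacker behavior earlier (this gap is already visible in a single period, where the attacker's state attack is sensitive to the victim's commitment at every reachable attacked state). To capture commitment, I would maintain at each meta-state $\ms$ a Pareto frontier $\Phi(\ms) \subseteq \Real^2$ of value pairs $(V_1, V_2)$ achievable by some Markovian leader commitment in the subgame rooted at $\ms$ under the attacker's WSE best response. At terminal meta-states, $\Phi(\ms) = \set{(0,0)}$. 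At victim meta-states, $\Phi(\ms)$ is the upper-convex envelope of $\bigcup_a \Phi(\mathrm{next}(a))$, since the victim may mix. At attacker meta-states, $\Phi(\ms)$ retains only pairs whose $V_2$-coordinate is maximal across the attacker's feasible actions, with WSE tie-breaking selecting minimum $V_1$. Stochastic transitions are handled by integrating frontier coordinates against the transition distribution. Once the DP processes $\mmu$, the optimal defense value is $\max\set{V_1 : (V_1, V_2) \in \Phi(\mmu)}$, and the defense policy is recovered by tracing the DP.

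The main obstacle is bounding $\abs{\Phi(\ms)}$ by a polynomial at every meta-state. In general perfect-information stochastic games, Pareto frontiers can grow exponentially with the horizon, so the argument must exploit the specific per-period subtime structure of $\mG$ (attacker--victim--attacker--attacker), the scalar form of the attacker's reward (which accrues only at the reward-attack subtime), and the finite horizon to inductively bound the number of non-dominated pairs at each meta-state by a polynomial in $|M|$ and $H$ after Pareto pruning. With polynomial-size frontiers and polynomially many meta-states, each DP step runs in polynomial time, establishing the theorem.
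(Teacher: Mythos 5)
Your reduction to a finite-horizon TBSG via \cref{obs: tbsg} and \cref{prop: optimal-defense} matches the paper, but your algorithm then diverges sharply from the paper's, and the divergence leaves a genuine gap. The paper's proof is plain backward induction: it asserts that $V^*_{h,1}$ has optimal substructure (the victim's optimal commitment restricted to any on-path subgame is itself optimal for that subgame), writes down the coupled recursions for $V^*_{h,1}$ and the attacker's best-response values $V^*_{h,2}$, and observes that each $\max$/$\min$ is a finite optimization over a polynomially sized set. You reject this route on the grounds that subgame-perfect play can be strictly worse for a Stackelberg leader than optimal commitment, and you replace it with a Pareto-frontier DP over achievable $(V_1,V_2)$ pairs. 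The problem is that the entire burden of the theorem then falls on the claim you explicitly defer: that $\abs{\Phi(\ms)}$ stays polynomial. You give no argument for this, and the natural obstruction is exactly where you wave at it: at a stochastic transition the achievable set is the probability-weighted Minkowski average of the children's frontiers, whose Pareto-undominated set generically has size on the order of the \emph{product} of the children's frontier sizes, so over $H$ periods the frontier can grow exponentially even after pruning. (Restricting to upper-convex envelopes would make Minkowski sums additive, but your attacker-node operation --- thresholding on the $V_2$ coordinate with worst-case tie-breaking --- does not preserve convexity, so that escape is not available as stated.) Without a concrete inductive bound, the proposal does not establish polynomial time, so it does not prove the theorem.

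That said, your diagnosis of why naive backward induction is suspect is sharp, and it is worth saying that the paper's own proof does not engage with it: the "optimal substructure" claim is simply asserted, and in a general-sum game the leader can profit from committing to a continuation that lowers \emph{both} players' values in an off-path subgame so as to steer the attacker's earlier choice of $\adag$ or $\sdag$ toward a state the victim prefers --- precisely the threat behavior that subgame perfection forbids. In the zero-sum case the two notions coincide and the paper's recursion (the simpler $\max_a \min_{\adag}$ formula) is unimpeachable; in the general-sum case your concern is real, but your proposed fix is exactly the construction whose complexity is the open question, so you have traded an unjustified structural claim for an unjustified combinatorial one. To salvage your route you would need either (i) a proof that the specific subtime structure of $\mG$ keeps the frontiers small, or (ii) a proof that for this particular game the commitment value actually coincides with the subgame-perfect value --- which would collapse your argument back into the paper's.
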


The intuition is the victim can simulate the attacker's best-response function using backward induction. Once it knows the best response for a particular stage game, it can then brute-force find the best action to take at that stage. The key insight is that the attacker's best response is always deterministic since it gets to see the victim's realized actions. Thus, the victim also has no benefit from randomization. As such, the victim can brute-force compute its optimal deterministic action to take during a single stage and then propagate that solution backward to be used in previous times.

To illustrate this, we derive a backward induction algorithm for efficient defense against action attacks and present the full defense algorithm in the Appendix. Suppose the victim has already committed to $\{\pi^*_t\}_{t = h+1}^H$, where $H$ is the finite time-horizon. Clearly, for any choice of victim's action $a$, the attacker's best response to $a$ and the future partial policy is:
\begin{align*}
    BR_{h}(s,a) = &\argmax_{\adag \in \mA(s,a)} g_h(s,a,r_h(s,a)) \\
    & \hspace{1 em}+ \E_{s'\sim P_h(s, \adag)} V^*_{h+1,2}(s', \pi^*_{h+1}(s')),
\end{align*}
where $V^*_{h,2}(s,a)$ is the maximum value achieved. Then, the victim can compute its best action for the stage game $(h,s)$ as a maximizer of,
\begin{align*}
    V^*_{h,1}(s) = &\max_{a \in \A} \min_{\adag \in BR_h(s,a)} r_h(s, \adag) \\
    & \hspace{1 em} + \E_{s' \sim P_h(s, \adag)} V^*_{h+1,1}(s').
\end{align*}
 
The construction for defending against all non-perceived state surfaces is a bit more complicated but retains this same structure.

\begin{remark}[Multi-Agent Extension]\label{rem: multi-agent}
    We note that all of our results remain the same when multiple victims are present. This can be done without changing any of the previous notations by interpreting $\A = A_1 \times \ldots \times A_n$ as the joint action space and $\pi$ as a joint policy. From the attacker's perspective, attacking many victims just looks like attacking a single victim with a large action space. A WSE in $\mG$ still breaks up into an independent joint policy for the victims and the attacker, but the joint policy may require the victims to correlate with each other.
\end{remark}

\section{Experiments}\label{sec: experiments}

We illustrate our frameworks with a classical grid-world shortest path problem with obstacles. Here, each state is a cell in a $n \times n$ grid. Some grid cells are filled with lava and so dangerous to the victim. From any cell, the victim can move left (L), right (R), up (U), or down (D) so long as it remains on the grid. In addition, the victim can stay (S) in its current cell. The agent wishes to get from the top-left cell $(0,0)$ to the bottom-right, ``goal'', cell $(n-1,n-1)$ as quickly as possible while avoiding lava. To capture this goal, we assume the victim receives a reward of $1$ for entering the goal cell and continues to receive a reward of $1$ for each time it remains there to incentivize the victim to reach the goal quickly. We also assume the victim receives a penalty reward of $-H$ whenever it enters a lava cell, where $H$ is the finite horizon.

Here, we test our methods on a $10 \times 10$ grid world with $H = 20$ so that the victim has enough time to reach the goal and stay there. We computed an optimal policy $\pi^*$ for the grid, which achieves the victim a value of $3$. In \cref{fig: clean} we visualize $\pi^*$ through the path the victim follows when using $\pi^*$. The black cells represent a cell the victim entered during its interaction. The orange cells represent lava.

\begin{figure}
    \centering
    \includegraphics[scale=.3]{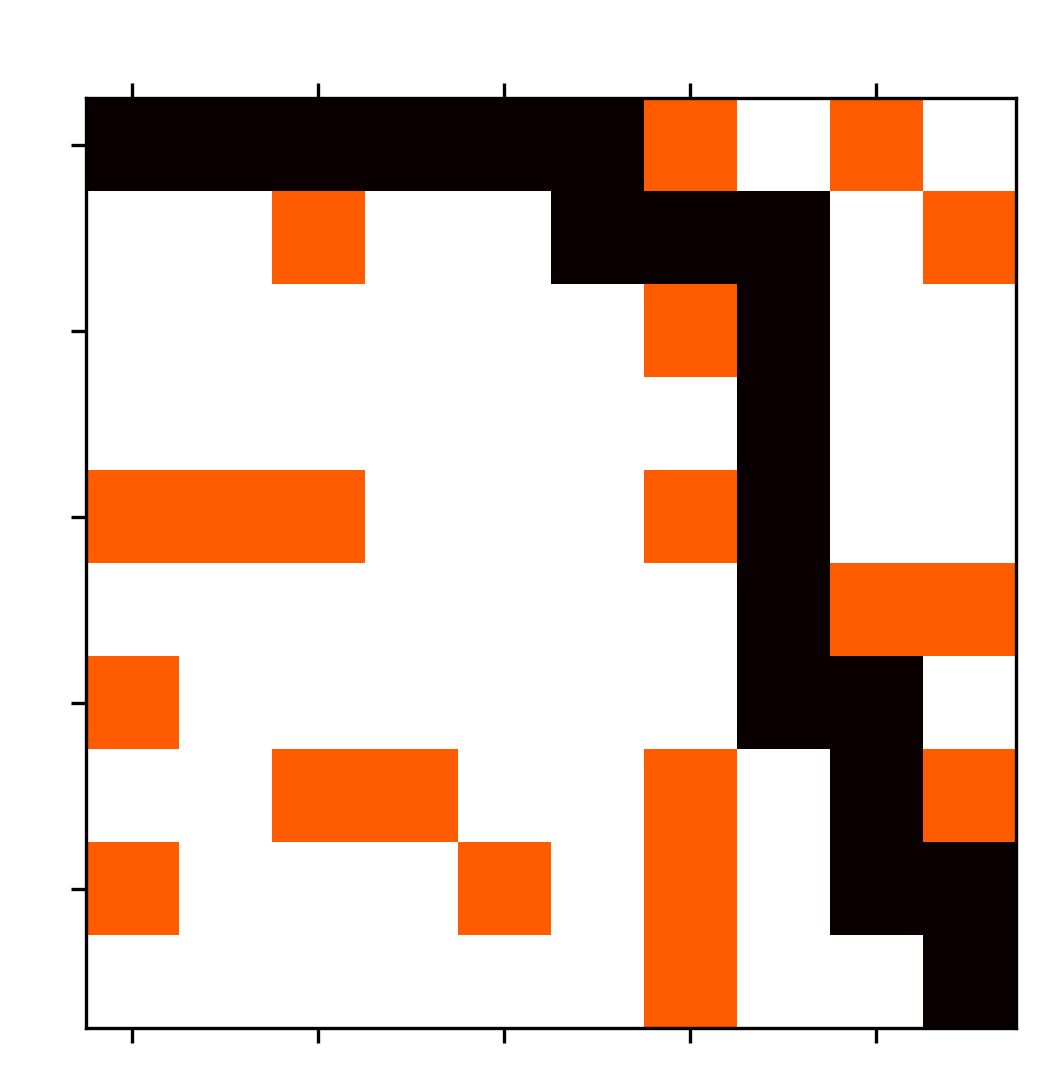}
    \caption{Optimal Policy Path.}
    \label{fig: clean}
\end{figure}

\subsection{Grid Attacks}\label{subsec: maze}

The attacker can utilize its surfaces to disrupt the victim's path. 
For simplicity, assume that the attacker is purely adversarial and so it seeks to prevent the victim from reaching the goal and even trick it into lava cells if possible. Suppose that most of the grid is under security and so attacks cannot be safely made. The attacker is restricted to only attacking edges of the grid, which are not monitored. Here, the regions include the top-right subgrid and the bottom-left subgrid shaded in yellow. However, in those regions, it may use any attack it likes from its given surface.

\begin{figure}
    \centering
    \includegraphics[scale=.3]{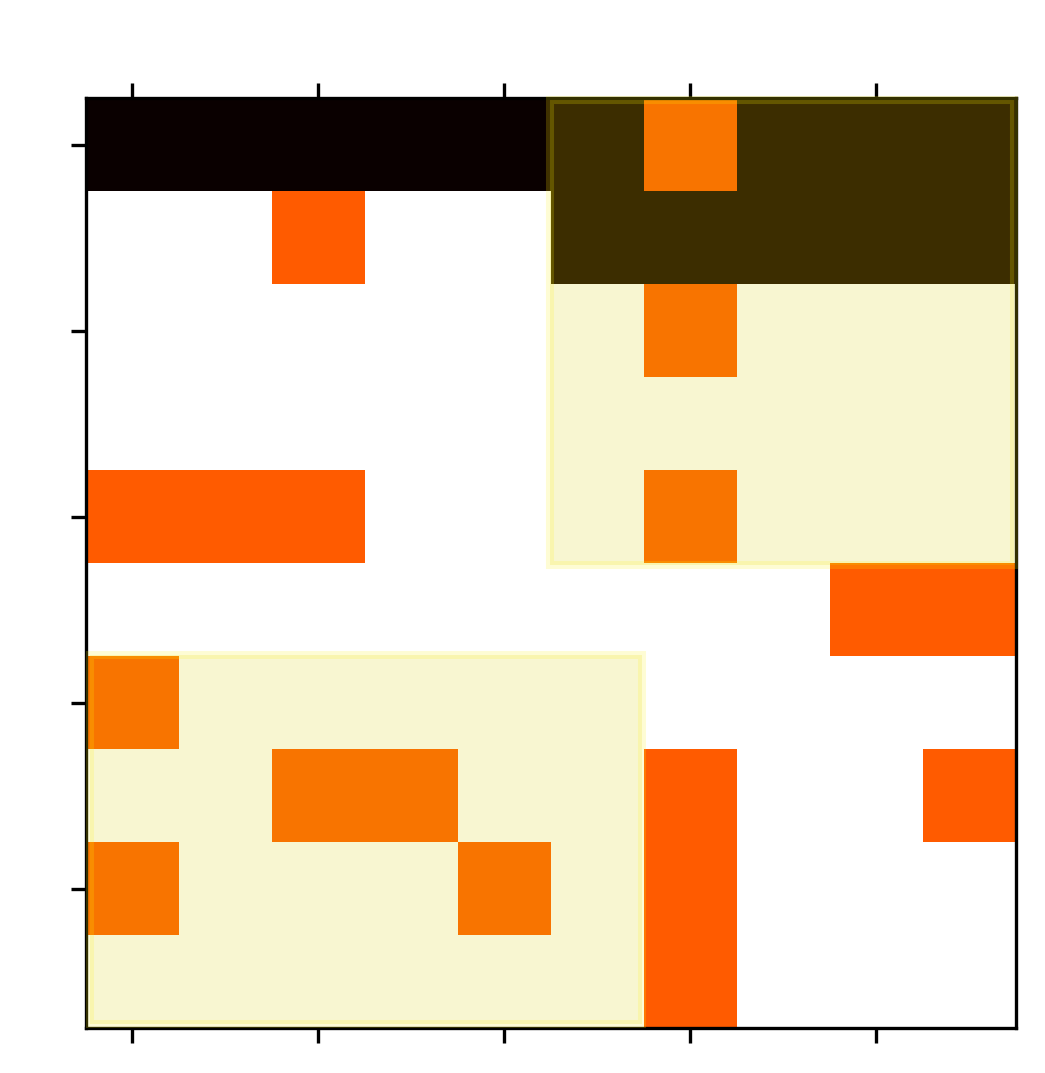}
    \includegraphics[scale=.3]{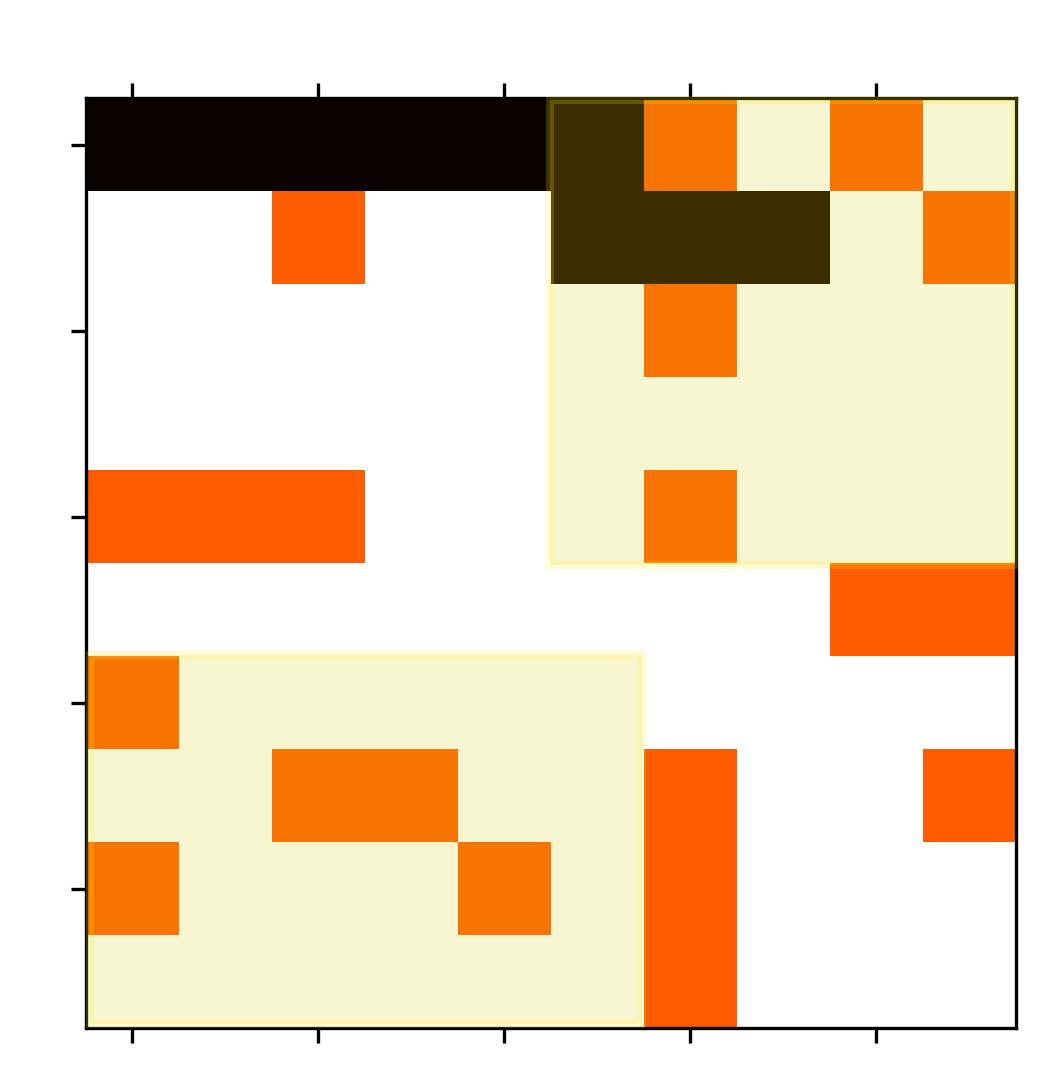}
    \includegraphics[scale=.3]{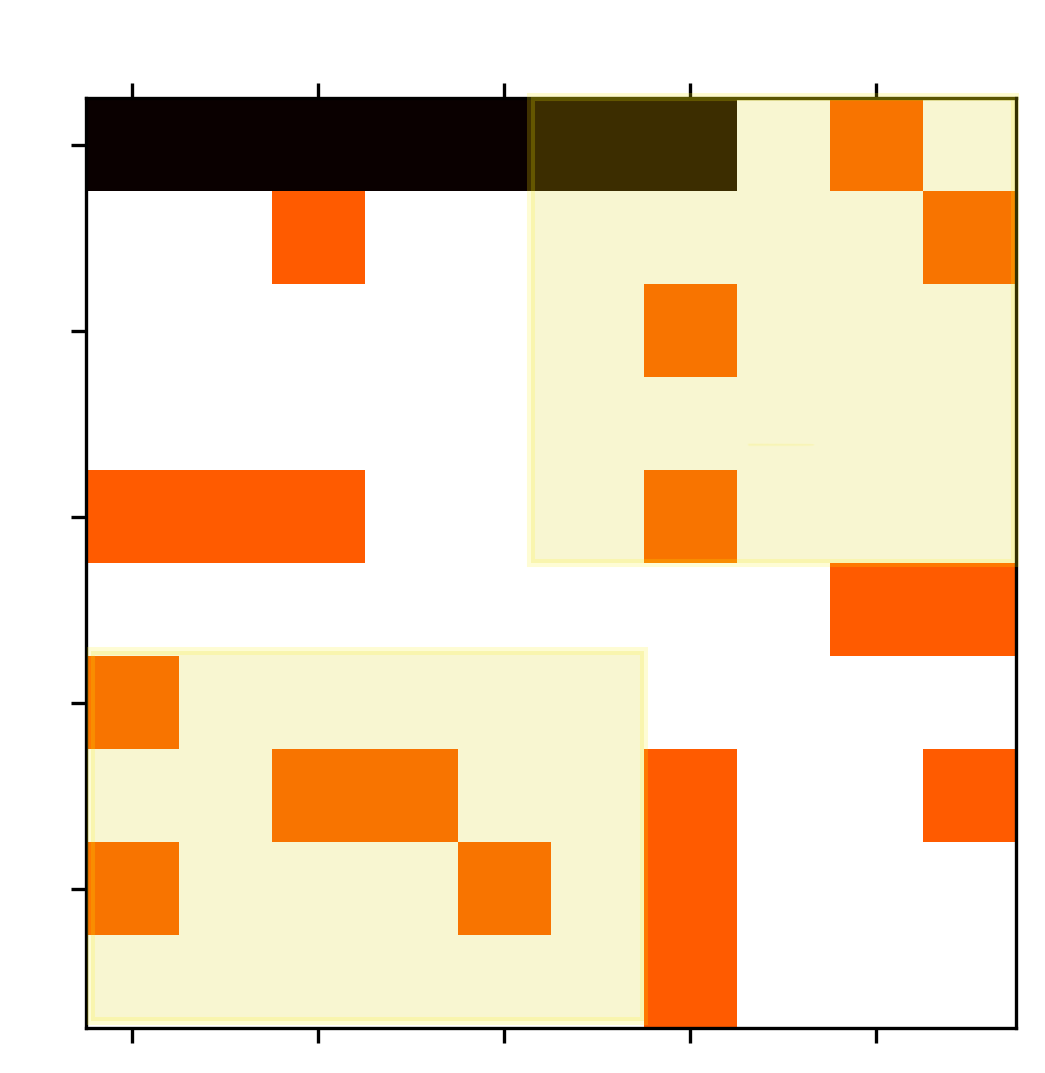}
    \caption{Attacked Paths.}
    \label{fig: attacks}
\end{figure}

In \cref{fig: attacks}, we see from left to right the path under an optimal perceived-state attack, true-state attack, and action attack. The agent receives $-100$, $0$, and $-160$ value from each attack respectively. In all cases, the victim no longer reaches the goal after getting attacked in the top-right subgrid. We see the perceived-state attack functions by tricking the agent into entering lava; whereas the action attack simply forces the victim into lava. On the other hand, the state attacks can transport the victim into lava, but they immediately leave and so suffers less damage than in the other attacks despite seeming to be the most powerful.

\subsection{Grid Defense}

We see that if the victim simply follows $\pi^*$, the effects of attacks can be catastrophic. 
The victim knows the upper-right and bottom-left subgrids are not monitored and so can assume attacks are conducted there. Using this information, the defense algorithm yields a policy $\hat \pi$ that completely avoids the unsafe region. The victim still achieves the optimal value of $3$ even under the strongest-possible attack. The new path under attack is illustrated in \cref{fig: defense}. We see the robust path simply squeezes between the two unsafe regions.

\begin{figure}
    \centering
    \includegraphics[scale=.3]{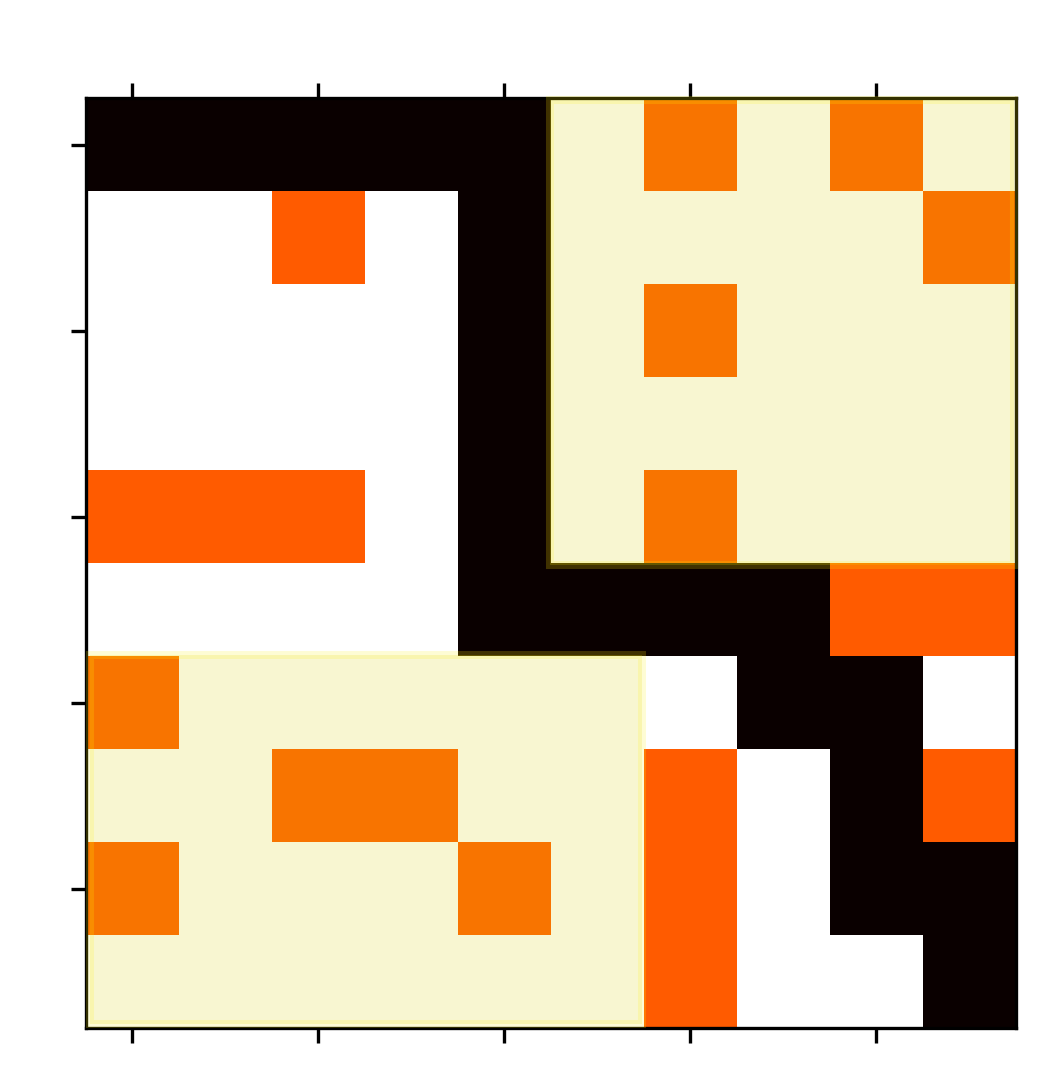}
    \caption{Defense Policy Path}
    \label{fig: defense}
\end{figure}

\section{Conclusions}\label{sec: conclusions}

In this paper, we rigorously studied the attack and defense problems of reinforcement learning. We showed that for any attack's surface, a malicious attacker can optimally and efficiently maximize its own rewards by solving a higher lever meta-MDP. Even against linear environments, an attacker can still efficiently compute optimal attacks. Thus, we call for an agent to play a robust policy to be safe against such attacks. To this end, we formally defined the defense problem to be a weak-Stackelberg equilibrium of the natural partially-observable turn-based stochastic game that is induced by the victim-attacker-environment interaction. In the zero-sum setting, we showed the defense problem boils down to finding a Nash equilibrium in a zero-sum POTBSG so standard planning and learning methods can find optimal defense policies. When perceived-state attacks are not allowed, the victim can also compute an optimal defense policy in polynomial time using a robust backward induction algorithm. Although we present an optimal defense, this defense may not be useful if the attacker is too powerful. It is critical for the victim to improve its detection abilities to restrict the attacker's feasible actions.

\section*{Acknowledgments}
This project is supported in part by NSF grants 1545481, 1704117, 1836978, 2023239, 2041428, 2202457, ARO MURI W911NF2110317, and AF CoE FA9550-18-1-0166. Xie is partially supported by NSF grant 1955997. We also thank Yudong Chen for his useful comments and discussions.

\bibliography{aaai24}

\onecolumn

\appendix

\section{Proofs for Optimal Attacks}\label{app: proofs}

The proof of the \cref{prop: optimal-attack} is immediate from the definition of each attack surface and the well-known fact that any MDP admits an optimal deterministic, Markovian policy. \cref{prop: efficient-attack} follows from the complexity results given in ~\cite{MDP-book} and ~\cite{PAC}.

\paragraph{Proof of \cref{thm: linear}.}

\begin{proof}
    The key idea is each meta-state transitions according to either, $O$, $\pi$, $R$, or $P$, and since each of these quantities is linear, the meta-transitions are linear. Also, as the rewards are a deterministic projection, they are also linear.
    If $M$ and $\pi$ are linear then,
    \[O(o \mid s) = \inner{\phi(s),\gamma(o)}, \quad \pi(a \mid o) = \inner{\psi(o), \delta(a)}, \quad R(r \mid s,a) = \inner{\phi(s,a),\theta(r)}, \quad P(s'\mid s,a) =  \inner{\phi(s,a),\mu(s')}.\]
    We first design a feature vector $\bar \phi(\ms, \ma)$ and vector $\mmu(\ms)$ that captures the transitions. Let $\ms \in \mS$, $\ma \in \mA(\ms)$, and $\ms' \in \mS$. From the definition of $\mM$,

    \begin{enumerate}
        \item If $\ms = s$, then $\ma = \sdag$ and $\ms' = (\sdag, o)$: \begin{equation*}
            \mP(\ms ' \mid \ms, \ma) := \O(o \mid \sdag) = \inner{\phi(\sdag),\gamma(o)}.
        \end{equation*}
        Define $\bar \phi(s, \sdag) = \phi(\sdag)$ and $\mmu(\sdag,o) = \gamma(o)$. Then,
        \[\inner{\phi(\sdag),\gamma(o)} = \inner{\bar \phi(\ms,\ma), \mmu(\ms')} .\]
        \item If $\ms = (s,o)$, then $\ma = \odag$ and $\ms' = (s, \odag, a)$: 
        \begin{equation*}
            \mP(\ms' \mid \ms, \ma) := \pi(a \mid \odag) = \inner{\psi(\odag), \delta(a)}.
        \end{equation*}
        Define $\bar \phi((s,o), \odag) = \psi(\odag)$ and $\mmu(s,\odag,a) = \delta(a)$. Then,
        \[\inner{\psi(\odag),\delta(a)} = \inner{\bar \phi(\ms, \ma), \mmu(\ms')} .\]
        \item If $\ms = (s,o,a)$, then $\ma = \adag$ and $\ms' = (s, o , \adag, r)$: 
        \begin{equation*}
            \mP(\ms' \mid \ms, \ma) := R(r \mid s, \adag) = \inner{\phi(s,\adag),\theta(r)}.
        \end{equation*}
        Define $\bar \phi((s,o,a), \adag) = \phi(s,\adag)$ and $\mmu(s,o,a,r) = \theta(r)$. Then,
        \[\inner{\phi(s,\adag),\theta(r)} = \inner{\bar \phi(\ms,\ma), \mmu(\ms')} .\]
        \item If $\ms = (s,o,a,r)$, then $\ma = \rdag$ and $\ms' = s'$:
        \begin{equation*}
            \mP(\ms' \mid \ms, \ma) := P(s' \mid s, a) = \inner{\phi(s,a),\mu(s')}.
        \end{equation*}
        Define $\bar \phi((s,o,a,r), \rdag) = \phi(s,a)$ and $\mmu(s') = \mu(s')$. Then,
        \[\inner{\phi(s,a),\mu(s')} = \inner{\bar \phi(\ms,\ma), \mmu(\ms')} .\]
    \end{enumerate}
    We lift each vector to dimension $d = \max(d(\pi), d(M))$ so each vector is in the same dimension.
    
    Now, to capture rewards, we will need to lift the vectors to one dimension higher. We add one entry to each vector. The entry is $g(s,a,\rdag)$ for meta-states of form $(s,a,r)$ and meta-action $\rdag$, making the new vector $\phi'(\ms,\ma) = \begin{bmatrix}
           g\left(s,a,\rdag\right) \\ \bar \phi(\ms,\ma)  
         \end{bmatrix}$. All other meta-states and meta-actions have a $0$ in the new entry, or $\phi'(\ms,\ma) = \begin{bmatrix}
           0 \\ \bar \phi(\ms,\ma)  
         \end{bmatrix}$. We define $\bar \theta = e_1$, the basis vector with a $1$ in the entry that we just added to the features. From the definition of $\mM$,
        \[\mr(s, \sdag) = \mr((s,a), \adag)  = \mr((s,a), \sdag) = 0 = \inner{\bar \phi'(\ms,\ma), e_1}.\]
        \[\mr((s,a,r), \rdag) = g(s,a,\rdag) = \inner{\bar \phi'(\ms,\ma), e_1}.\]
     Thus, $\mM$ is linear with dimension at most $1 + \max(d(\pi), d(M))$.
\end{proof}
We note that for the finite-horizon case, the proof remains the same except for one aspect. If the attacker's reward $g$ is time-dependent, then the feature vectors involving $\rdag$ must save $g_h(s,a,\rdag)$ for all $h$. Thus the dimension would be $H+1 + \max(d(\pi),d(M))$ if we modify the proof in the obvious way.

\paragraph{More Details of \cref{rem: beyond-markovian}.}
When the victim's policy is not Markovian, i.e. uses some amount of history, the construction of $\mM$ needs to be slightly modified. Suppose $\mathcal{M}$ denotes the victim's finite memory. For any $m \in \mathcal{M}$, we let $m(\cdot)$ denote the victim's updated memory upon receiving new information. Since the victim has a finite amount of memory, we assume that when the memory is updated, it removes the oldest saved information to make space for the new information. We also assume here, the attacker is attacking a Markov game $G$ so that $\pi$ is now a joint policy as described in \cref{rem: multi-agent}. We define the attacker's meta-MDP as $\mM = (\mS, \mA, \mP, \mr, \gamma, \mmu)$, where, 

\begin{itemize}
    \item $\mS = (\mathcal{M} \times \S) \cup (\mathcal{M} \times \S \times O) \cup (\mathcal{M} \times \S \times O \times \A) \cup (\mathcal{M} \times \S \times O \times \A \times \R)$.
    
    \item $\mA \subseteq \S \cup \A \cup \R$ consists of all the attacker's potential manipulations of the interaction. The meta-action space is meta-state dependent: $\mA(m,s) \subseteq \S$, $\mA(m,s,o) \subseteq \O$, $\mA(m,s,o,a) \subseteq \S \cup \A$, and $\mA(m,s,o, a, r) \subseteq \R$.
    
    \item Suppose that $m \in \mathcal{M}$, $s \in \S$, $o \in O$, $a \in \A$, and $r \in \R$. Then,

    \begin{itemize}
        \item If $\ma = \sdag$ is a true-state attack at the first subtime, then $G$'s state becomes $\sdag$ and generates an observation according to $P(\sdag)$.
        \[\mP((m, \sdag, o) \mid (m,s), \sdag) := P(o \mid \sdag).\]
        \item If $\ma = o^{\dagger} \in O$ is an observation attack, then the agents choose a joint action $a \sim \pi(m,\odag)$. Also, the agent sees the attacked observation and so updates its memory to $m(\odag)$.
        \begin{align*}
            \mP((m(\odag),s, \odag,a) &\mid (m,s,o), o^{\dagger}) := \pi(a \mid m, \odag).
        \end{align*}
        \item If $\ma = \adag \in \A$ is an action attack,  $G$ receives action $\adag$. Thus, $G$ generates reward according to $R(s,\adag)$. The agents also update their memory to $m(a)$.  
        \begin{align*}
            \mP((m(a),s, o, \adag, r)  &\mid (m,s,o,a), \adag) := R(r \mid s,\adag).
        \end{align*}
        \item If $\ma = \sdag \in \S$ is a true-state attack at the third subtime, $G$'s state becomes $\sdag$. Thus, $G$ generates reward according to $R(\sdag,a)$. The agents also update their memory to $m(a)$.   
        \begin{align*}
            \mP((m(a),\sdag, o, a, r)  &\mid (m,s,o,a), \sdag) := R(r \mid \sdag, a).
        \end{align*}
        \item If $\ma = \rdag \in \R$ is a reward attack, $G$'s transitions are not effected. Thus, $G$  transitions normally according to $P(s,a)$. Also, the agent sees the attacked reward and so updates its memory to $m(\rdag)$.
        \[\mP((m(\rdag),s') \mid (m,s,o,a,r), \rdag) := P(s' \mid s,a).\]
    \end{itemize}
    All other transitions have probability $0$.
    
    \item Using the same definitions as above,

    \begin{itemize}
        \item There is no immediate reward to the attacker for non-reward attacks since the agents have not received a reward yet.
        \begin{align*}
            &\mr((m,s), \sdag) = \mr((m,s,o),o^{\dagger}) = \mr((m,s,o,a), \adag)  = \mr((m,s,o,a), \sdag) = 0.
        \end{align*}
        
        \item If $\ma = \rdag \in \R$, the agents receive reward $\rdag$ and so the attacker receives reward according to its reward function $g(s,a,\rdag)$.
        \[\mr(\ms, \ma) = \mr((m,s,o,a,r), \rdag) = g(s,a,\rdag).\]
    \end{itemize}
    
    \item $\mmu(s) = \mu(\varnothing, s)$ for each $s \in \S$ where $\varnothing$ denotes the empty memory, and $\mmu(\ms) = 0$ otherwise.
\end{itemize}

\paragraph{Deterministic Markovian policy and deterministic reward.} If the agent's policy is deterministic and Markovian, which is the standard for optimal policies, and $M$ is fully-observable with deterministic rewards, the attacker's meta-MDP can be drastically simplified. We define the attacker's meta-MDP as $\mM = (\mS, \mA, \{\mP_h\}, \{\mr_h\}, H, \mmu)$, where,

\begin{itemize}
    \item $\mS = \S$.
    
    \item $\mA = (\S \times \{p\}) \cup \A \cup (\S \times \{t\}) \cup \R$, where $(s, p)$ is a perceived state attack and $(s, t)$ is a true state attack.
    
    \item $\mP_h(\ms' \mid \ms, \ma)$'s definition varies depending on the choice of $\ma$. Suppose that $s$ is the current state at time $h$. Then,

    \begin{itemize}
        \item If $\ma = (\sdag,p) \in \S$ is a perceived-state attack,
        \[\mP_h(s' \mid s, (\sdag,p)) = P_h(s' \mid s,\pi_h(\sdag)).\]
        \item If $\ma = \adag \in \A$, 
        \[\mP_h(s' \mid s, \adag) = P_h(s' \mid s,\adag).\]
        \item If $\ma = (\sdag,t) \in \S$ is a true-state attack, 
        \[\mP_h(s' \mid s, (\sdag,t)) = P_h(s' \mid \sdag,\pi_h(\sdag)).\]
        \item If $\ma = \rdag \in \R$, 
        \[\mP_h(s' \mid s, \rdag) = P_h(s' \mid s,\pi_h(s)).\]
    \end{itemize}
    and all other transitions have probability $0$ at time $h$.
    
    \item $\mr_h(\ms, \ma)$'s definition similarly depends on $\ma$ as well as the form of $\ms$. Using the same definitions as above, we have 

    \begin{itemize}
        \item If $\ma = (\sdag,p) \in \S$ is a perceived-state attack,
        \[\mr_h(s, (\sdag,p)) = g(s,r_h(s,\pi_h(\sdag))).\]
        \item If $\ma = \adag \in \A$, 
        \[\mr_h(s, \adag) = g(s,r_h(s,\adag)).\]
        \item If $\ma = (\sdag,t) \in \S$ is a true-state attack, 
        \[\mr_h(s, (\sdag,t)) = g(s,r_h(\sdag,\pi_h(\sdag))).\]
        \item If $\ma = \rdag \in \R$, 
        \[\mr_h(s, \rdag) = g(s,\rdag).\]
    \end{itemize}
    
    \item $\mmu(s) = \mu(s)$ for each $s \in \S$ and $\mmu(\ms) = 0$ otherwise.
\end{itemize}

\section{Proofs for Optimal Defense}

The proof of \cref{prop: optimal-defense} and \cref{prop: zero-sum-defense} is immediate from the definitions of the defense problem, the victim-attacker-environment interaction, and the definition of turn-based stochastic games~\cite{TBSGtheory}.

\paragraph{Proof of \cref{prop: defense-hardness}.}

\begin{proof}
    We show that the defense problem with observation or perceived-state attacks captures solutions to POMDPs. As such, all hardness results for POMDPs apply to Defense. Namely, computing $\epsilon$-optimal deterministic Markovian policies is NP-hard \cite{POMDP-hardness}. Also, for the discounted infinite horizon case, computing an optimal stationary stochastic memory-less policy is NP-hard \cite{Stochastic-Blind}.  
    
    We note any POMDP can be formulated as an equivalent POMDP with a deterministic observation function that is only polynomially sized larger. Thus, we can focus on POMDPs with a deterministic observation function $o : \S \to \O$. We can also assume that $\O \subseteq \S$. Given such a POMDP, define $M$ to be the same as the POMDP ignoring the observation part, and define the set constraints for a perceived-state attack to be the singleton $B(s) = \{o(s)\}$. This ensures the only feasible attack $\nu$ is exactly the observation function $o$. Thus, solving the defense problem for a maximum policy $\pi$ over some class of policies $\Pi$ is exactly equivalent to solving the POMDP over that class of policies $\Pi$. Hardness then follows.

\end{proof}

The proof of \cref{prop: efficient-defense} follows from \cref{obs: tbsg} and the stated complexity results.

\paragraph{Proof of \cref{thm: defense-planning}.} We present a formal backward induction algorithm for computing defenses. We define $V^*_{h,1}(s) := \max_{\pi \in \Pi_h} \min_{\nu \in BR_h(\pi)} V^{\pi,\nu}_{h,1}(s)$, where $\Pi_h$ is the set of Markovian partial policies from time $h$ forward, $BR_h(\pi) := \argmax_{\nu \in N_h} V^{\pi,\nu}_{h,2}(s)$, and $V^{\pi,\nu}_{h,i}(s)$ is the value of the stage game $(h,s)$ for player $i$ (i.e. their expected value from time $h$ onward starting from state $s$). It is clear that $V^*_{h,1}(s)$ admits optimal substructure: if $\pi^*_h(s)$ is a maximizer of $V^*_{h,1}(s)$, then it must be the case that $\pi^*_{h+1}(s')$ is a maximizer of $V^*_{h+1,1}(s')$ for any $s'$ reachable with non-zero probability under $\pi^*_h(s)$. Otherwise, more value could be achieved by improving the value in the future. This is the standard argument for why rollback successfully computes a subgame perfect equilibrium in sequential games. Here it is crucial that the defense problem is captured by a turn-based game, which gives us this sequential structure. Given this observation, it is clear that $V^*_{h,i}(s)$ can be computed recursively as follows. Suppose the victim already computed its optimal $\pi^*_{h+1}$. Then, $\pi^*_h(s)$ can be computed as a maximizer of $V^*_{h,1}(s)$, where:

\begin{equation}
    V^*_{h,1}(s) := \max_{a \in \A} \min_{\adag \in BR_h(s,a)} \E_{r \sim R_h(s,\adag)}\left[\min_{\rdag \in BR_h(s,\adag,r)} \rdag + \E_{s' \sim P_h(s, \adag)}\brac{\min_{\sdag \in BR_{h+1}(s')} V_{h+1,1}^*(\sdag)}\right].
\end{equation} 

Each best-response function is defined as follows. In each case, the attacker must wait for further information and then adapt to each realization. Hence the separate best-response functions.

\begin{equation}
    BR_{h}(s,a) := \argmax_{\adag \in \mA_h(s,a)} \E_{r \sim R_h(s,\adag)} V^*_{h,2}(s,a,r).
\end{equation}

\begin{align}
    BR_{h}(s,a,r) := \argmax_{\rdag \in \mA_h(s,a,r)} g_h(s,a,\rdag) + \E_{s' \in P_h(s,a)} V^*_{h+1}(s').
\end{align}

\begin{equation}
    BR_{h+1}(s) := \max_{\sdag \in \mA_{h+1}(s)} V^*_{h+1,2}(\sdag, \pi_{h+1}(\sdag)))
\end{equation}

In all cases, $V^*_{h+1,2}(\cdot)$ is defined to be the maximizing value for each corresponding $BR$ set.

In the zero-sum case, we get the even simpler expression,

\begin{equation}
    V^*_{h,1}(s) = \max_{a \in \A} \min_{\adag \in \mA_h(s,\adag)} \E_{r \sim R_h(s,\adag)}\left[\min_{\rdag \in \mA(s,\adag,r)} \rdag + \E_{s' \sim P_h(s, \adag)}\brac{\min_{\sdag \in \mA(s')} V^*_{h+1,1}(\sdag)}\right].
\end{equation}

Since turn-based games admit deterministic solutions, it is clear that the victim can compute its optimal defense in polynomial time by simply brute-force computing each $\max$ and $\min$ in the above expressions, which are finite optimizations in the tabular case.

\section{Code Details}

We conducted our experiments using standard python3 libraries. We provide our code in a jupyter notebook with the example grid being hard-coded in to ensure the experiments can be easily reproduced\footnote{Code can be found at \url{https://github.com/jermcmahan/Attack-Defense}.}.

\end{document}